\documentclass{article}

\usepackage{iclr2027_conference,times}
\usepackage[T1]{fontenc}
\usepackage[utf8]{inputenc}
\usepackage{amsmath,amssymb,amsfonts,amsthm,mathtools}
\usepackage{graphicx}
\usepackage{booktabs}
\usepackage{wrapfig}

\usepackage{etoolbox}

\AtBeginEnvironment{table}{\scriptsize}
\AtBeginEnvironment{table*}{\scriptsize}
\newcommand{\fittable}[1]{%
  \resizebox{\ifdim\width>\columnwidth \columnwidth\else \width\fi}{!}{#1}}
\usepackage{multirow}
\usepackage{array}
\usepackage{tabularx}
\usepackage{makecell}
\usepackage{caption}
\usepackage{xspace}
\usepackage{microtype}
\usepackage{enumitem}

\usepackage{xcolor}
\usepackage[hidelinks]{hyperref}
\usepackage{url}

\newtheorem{proposition}{Proposition}
\newtheorem{lemma}{Lemma}

\theoremstyle{definition}

\newcommand{\name}{RIPE-MambaSpike\xspace}
\newcommand{\RR}{\mathbb{R}}\newcommand{\CC}{\mathbb{C}}
\newcommand{\Real}{\operatorname{Re}}
\newcommand{\sigmoid}{\operatorname{\sigma}}
\newcommand{\diag}{\operatorname{diag}}

\title{\name: \underline{R}esolution-\underline{I}ndependent
       Spiking--State-Space Interfaces\\
       for \underline{P}arameter-\underline{E}fficient \mbox{Event-Based} Vision}

\author{
Md Muhiminul Islam \\
Independent Researcher \\
\texttt{muhiminul@gmail.com}
\And
Shoaib Ahmed Dipu \\
Indiana University \\
\texttt{shdipu@iu.edu}
\And
Sayeed Shafayet Chowdhury \\
Indiana University \\
\texttt{saychow@iu.edu}
}

\iclrfinalcopy

\begin{document}
\maketitle
\fancyhead{}
\lhead{Preprint. Under review.}

\begin{abstract}
Spiking–Mamba hybrids reach strong accuracy on event-based vision, but existing designs often require tens of millions of parameters. Much of that cost comes from how the spiking front-end is connected to the state-space backbone rather than from the hybrid architecture itself. In a representative model, a single resolution-dependent projection accounts for 33.55M of 36.25M parameters. To that end, we introduce \name (\textbf{R}esolution-\textbf{I}ndependent, \textbf{P}arameter-\textbf{E}fficient), which replaces that projection with a hierarchical multi-resolution bridge of fixed channel width. Its deployed footprint is 0.870M parameters, constant at fixed time steps and widths across a 43× range of input areas. Reparameterized spiking stages, temporal decoupled modulation, and a dynamic convex-hull-bounded dual-stream membrane-potential attention preserve accuracy under this compact design. Result-wise, \name is pareto-optimal on CIFAR10-DVS, N-Caltech101, and DailyDVS-200. Notably, on the 200-class DailyDVS-200, a scaled 8.04M configuration achieves 45.7\% top-1 accuracy, the best reported spiking result on that benchmark, and outperforms prior spiking methods with 3.0--15.1× fewer parameters than dense ANNs. Overall, our findings demonstrate that competitive event-based recognition does not require resolution-dependent parameter growth. Code is available at \url{https://github.com/MuhiminOsim/RIPE-MambaSpike}.

\end{abstract}

\section{Introduction}\label{sec:intro}
Spiking Neural Networks (SNNs) match event cameras representationally. Their
temporal, event-driven dynamics align with the sparse asynchronous streams of
Dynamic Vision Sensors. A second, more conditional advantage motivates much of
the field: binary spikes let a layer replace multiply--accumulate with
accumulate-only operations, which costs less energy on supporting
silicon~\citep{davies2018loihi,merolla2014truenorth,horowitz2014energy}. That
advantage is contingent on every convolution operand being binary. Under
standard sequential execution, many otherwise-spiking architectures do not meet
that condition, ours included. On conventional GPU/edge runtimes we therefore
make no energy, operation-count or latency claim (\S\ref{sec:complexity}). An exact
algebraic reparameterization recovers the binary-operand property for
neuromorphic deployment (Proposition~\ref{prop:ac-tdm}).

Two obstacles separate these advantages from deployment. Accuracy is the
familiar one. SNNs have trailed dense ANNs on temporally rich benchmarks such
as DVS-Gesture~\citep{amir2017dvsgesture}, CIFAR10-DVS~\citep{li2017cifar10dvs}
and N-Caltech101~\citep{orchard2015nmnist}. The less examined obstacle is
\emph{footprint}. Whatever the arithmetic, the model must fit on the device
beside the sensor. On embedded platforms the parameter budget usually binds
first, held as it is in on-chip SRAM or modest flash, and it binds absolutely:
a model whose weights do not fit cannot run at all, however its arithmetic is
counted. Event cameras ride on micro-aerial vehicles, robots and
wearables~\citep{gallego2020survey}. Our weight storage is $1.7$--$16.1$\,MB
depending on benchmark, against $153$--$294$\,MB for the models leading the
largest event-action benchmarks, and that gap decides what can sit beside the
sensor at all (\S\ref{app:footprint}). This paper addresses that second axis: \emph{parameter}-efficiency.

\paragraph{Where the parameters actually go.}
A growing line of work closes this gap by hybridizing SNNs with high-capacity
sequence models. Spiking--Mamba architectures~\citep{li2024mambaspike} are the
most compelling, since Mamba's selective state-space
scan~\citep{gu2023mamba} models long event streams in linear time with
content-aware gating. These hybrids reach their accuracy at tens of millions
of parameters, and the reason is narrower, and more fixable, than sheer size.
Mamba-Spike's released model deploys $36.25$\,M parameters at $128^2$. Of
those, $33.55$\,M ($92\%$) sit in a single layer, the projection carrying
the spiking front-end into the scan, which flattens the feature map and
projects it densely:
\begin{equation}\label{eq:flatten-project}
\texttt{Linear}(C\lfloor H/4\rfloor\lfloor W/4\rfloor, d).
\end{equation}
Its parameter count is \emph{quadratic in sensor resolution and independent of
model capacity}. Measured by direct instantiation
(Table~\ref{tab:resolution-scaling}), it grows $49\times$ from $34{\times}34$
to $224{\times}224$, reaching $97.4\%$ of all parameters. The cost is an artifact of \emph{how the
SNN is joined to the SSM}, not an intrinsic price of the pairing. Spiking
\emph{transformer}
baselines~\citep{zhou2023spikformer,yao2023sdt,lee2025statten} are already flat
at $2.57$\,M across resolutions, so the pathology is specific to
flatten-then-project bridges rather than to attention models in general.

\paragraph{Thesis: the interface, not the capacity.}
That measurement gives the thesis its shape. The parameter cost of
Spiking--Mamba hybrids is a property of the SNN--SSM interface, not of the
capacity the task demands. An interface built on a fixed token width
$d_{\text{m}}$ removes that cost at no loss of accuracy: our deployed count is
constant at $0.870$\,M across a $43\times$ range of input areas (from $34^2$ to
$224^2$). Accuracy-competitive Spiking--Mamba
modeling therefore does not need a large parameter budget, provided the
architecture is co-designed under a strict \emph{reparameterization-first}
principle: pay the parameter cost at training time only, and recover at
inference a compact single-kernel model with sparse activations and linear
attention. The result is a sub-million-parameter
deployed network, $\sim$$41\times$ smaller than Mamba-Spike, that nonetheless
exceeds it on every shared benchmark. Each multi-branch convolution admits a
closed-form fusion into one $3{\times}3$ kernel, generalizing
RepVGG~\citep{ding2021repvgg} to SNNs with BatchNorm-over-Time and spike
non-linearities. Attention is linear in token count with a non-saturating
feature map. The SNN--Mamba interface carries the front-end into the scan at a
fixed width, so no state-space parameter tracks the sensor.

\paragraph{Contributions.}
\begin{enumerate}[leftmargin=1.4em,itemsep=1pt,topsep=1pt]
\item[\textbf{C1.}] \name, a compact hierarchical Spiking--Mamba hybrid.
Reparameterized spiking backbones~\citep{wang2024rtformer} and spiking
state-space hybrids~\citep{li2024mambaspike} have been explored separately. We
claim their \emph{combination}: a resolution-decoupled interface carrying
bounded membrane-potential attention, co-designed under one
reparameterization-first principle, at under one million deployed parameters
(Table~\ref{tab:vs_mambaspike} sets the two architectures side by side).
\item[\textbf{C2.}] \emph{Temporal Decoupled Modulation (TDM)}, a per-channel
learnable temporal filter with three analytical views (discrete-derivative,
$z$-domain FIR, gradient-decoupling). Its commutation with the following
convolution (Proposition~\ref{prop:ac-tdm}) refactors the pair into two static
kernels over binary operands, a result that is not specific to TDM.
\item[\textbf{C3.}] \emph{Dual-Stream Membrane-Potential Attention (DS-MPA)}, a
linear attention with a non-negative ELU feature map between the
membrane-potential (query/key) and prior-spike (value) streams, provably
bounded by construction within the dynamic convex hull of its values (and in $[0,1]$
for unprojected binary spikes), avoiding the
gate-saturation problem of earlier spike-driven linear attention.
\item[\textbf{C4.}] A \emph{Multi-Resolution Spiking--State-Space Aggregation
(MR-S3A) bridge} that fuses the spiking stages at a \emph{fixed} model width,
so the scan's parameter count is independent of $H{\times}W$, verified by
instantiation across $34^2$ to $224^2$ ($43\times$ in area).
\item[\textbf{C5.}] Across five event benchmarks, \name beats a controlled
Mamba-Spike reproduction wherever the two overlap, at $1.2$--$41\times$ fewer
parameters, and establishes a Pareto-optimal trade-off across the object and
large-scale action benchmarks. DailyDVS-200, one of the largest event action
benchmarks published and the hardest we evaluate, is where just one of the
twelve methods we tabulate clears $50\%$ top-1; there \name is the strongest spiking method by
$8.80$\,pp, within $6.25$\,pp of the state of the art at an order of magnitude
fewer parameters.
\end{enumerate}

\section{Related Work}\label{sec:related}

\paragraph{Vision SNNs.}
Direct-training vision SNNs split into spike-driven
transformers~\citep{zhou2023spikformer,yao2023sdt,yao2024sdtv2} and spiking
residual networks~\citep{fang2021sgseu,hu2024msresnet} with learnable membrane
constants~\citep{fang2021plif}. TET loss~\citep{deng2022tet} and
RecDis~\citep{guo2022recdis} are near-universal training tools, orthogonal to
our architecture and adopted in our recipe. The strongest current results on
our benchmarks come from this transformer family. QKFormer~\citep{zhou2024qkformer}
reaches $84.0\%$ on CIFAR10-DVS at $1.50$\,M, STAtten~\citep{lee2025statten}
adds spatial--temporal attention, and the attention-free Spiking Wavelet
Transformer~\citep{fang2024swformer} holds the best N-Caltech101 result we
know of at $88.45\%$, setting the accuracy ceiling. \name does not surpass
them, and instead competes on the accuracy--parameter trade-off, since none
provides an inference-time-fused reparameterized backbone and so each
carries its full multi-branch parameter cost into deployment.
SpikePool~\citep{lee2025spikepool} replaces self-attention with max-pooling
($80.20$/$81.62\%$ at $0.55$\,M, $82.70$/$85.01\%$ at $2.19$\,M on
CIFAR10-DVS/N-Caltech101, Table~\ref{tab:sota-all}); \name beats both on
N-Caltech101 at up to $2.5\times$ fewer parameters, trading $1.3$\,pp for a
$2.5\times$ smaller footprint on CIFAR10-DVS.

\paragraph{State-space models and Spiking--Mamba hybrids.}
The SSM lineage~\citep{gu2022s4,gu2023mamba} offers linear-time sequence
modeling with content-aware gating, adapted to vision by
Vim~\citep{zhu2024vim} and VMamba~\citep{liu2024vmamba}.
\textbf{Mamba-Spike}~\citep{li2024mambaspike}, our primary baseline, couples a
small spiking encoder to a Mamba stack. SpikMamba~\citep{chen2024spikmamba}
adds window-based spiking linear attention tuned for fixed-resolution
gesture sequences ($99.01\%$ on DVS-Gesture at $0.18$\,M,
Table~\ref{tab:sota-all}); \name instead keeps its parameter count invariant
across input scales via MR-S3A's fixed-width bridge. Vision
SmolMamba~\citep{bai2026smolmamba} prunes tokens inside a spiking SSM. Its
target is token count; ours is the \emph{parameter} coupling between
resolution and the interface, a different and complementary lever. We prune
nothing (every MR-S3A token enters the scan), and our invariance follows
instead from the bridge's fixed width and the length-independent
$\mathcal{O}(d_{\text{m}}^2)$ cost of Mamba blocks. A system could adopt both.
FLAMES~\citep{chakraborty2025flames} takes a complementary event-native SSM
route, adapting HiPPO to inter-spike intervals rather than touching the
interface. SPARTA~\citep{jang2025sparta} prioritizes tokens by spike timing,
reaching $98.78/83.06\%$ on DVS-Gesture/CIFAR10-DVS at ${\sim}13.8$\,M,
$15\times$ our footprint. TP-Spikformer~\citep{tpspikformer2026} prunes
progressively. TS-SNN~\citep{yu2025tssnn} is a FLOP-free temporal-shift
operator in TDM's spirit; it holds an accumulate-only backbone by
construction, where TDM recovers one by an exact commutation
(\S\ref{sec:ac-inference}). \name occupies the sub-million-parameter
corner these leave open.

\paragraph{Structural reparameterization.}
RepVGG~\citep{ding2021repvgg}, MobileOne~\citep{vasu2023mobileone} and
RepGhost~\citep{chen2022repghost} fuse multi-branch training graphs into
single kernels. Closest is RTFormer~\citep{wang2024rtformer}, which
reparameterizes a spiking block and folds temporal BN into the neuron
\emph{threshold}. We instead retain BNTT as a per-timestep affine absorbed
into a final scale--shift, target the sub-million regime, and pair
reparameterization with a state-space rather than an attention backbone. Our
claim is not that reparameterized spiking backbones are novel; we position \name
as the first to combine one with a hierarchical Spiking--Mamba interface at
this scale (Appendix~\ref{app:method}). DS-MPA fixes the normalizer-collapse
issue of linear attention~\citep{katharopoulos2020linear,choromanski2021performer} by
construction (\S\ref{sec:dsmpa}).

\section{Preliminaries}\label{sec:prelim}
We accumulate a DVS event stream into $T$ time bins to form
$\mathbf{X}\in\RR^{B\times T\times 2\times H\times W}$ (two polarity channels).
A leaky integrate-and-fire (LIF) neuron integrates input $x_t$ into a membrane
potential $u_t=\beta(u_{t-1}-v_{\text{th}}s_{t-1})+x_t$ and fires $s_t=\Theta(u_t-v_{\text{th}})$ with a
soft reset. Because $\Theta'$ is a Dirac delta, we backpropagate through the ATan
surrogate~\citep{neftci2019surrogate}, whose sharpness $\alpha$ we anneal from
$2$ to $4$, or to $3$ on CIFAR10-DVS and N-Caltech101 (Appendix~\ref{app:hyper}). That single $\alpha$ is also the temperature of the soft-spike
sigmoid in the consistency loss (Section~\ref{sec:loss}), and is distinct from
the per-channel TDM coefficients $\lambda$ and the scalar loss weights. We use
per-timestep BatchNorm (BNTT)~\citep{kim2021bntt}, which keeps $T$ independent
affine pairs. Mamba~\citep{gu2023mamba} evolves a hidden state via a selective,
input-dependent state-space recurrence at $\mathcal{O}(Ld_{\text{state}}d)$ cost (full
discretization in Appendix~\ref{app:method}). We use the standard
synaptic-operation (SOP) accounting convention~\citep{horowitz2014energy}: a
spiking convolution with average
pre-synaptic spike rate $\bar r$ contributes $\bar r\!\cdot\!\Phi$
accumulate-only (AC) operations \emph{only when its operand is binary}, and
dense multiply-accumulate (MAC) operations otherwise. In \S\ref{sec:complexity}
we use this convention to state why, under standard sequential execution, the
deployed backbone does not qualify for accumulate-only accounting, and we
report picojoule figures only for the dual-kernel reparameterization
(Proposition~\ref{prop:ac-tdm}) that recovers it for neuromorphic deployment.

\section{Method: \name}\label{sec:method}

\name maps a 5-D event tensor to per-timestep class logits (averaged over $T$ at
evaluation, supervised per-timestep by TET) through: an entry BNTT; three
\emph{RepSpikeStage} blocks of channels $[32,64,128]$ with stride-2 downsampling
and heterogeneous neurons (CSiLIF, SiLIF, SiLIF); a SpikeToRate$+$MR-S3A bridge
into a $d{=}32$ token space widened to $128$; a
three-level hierarchical bidirectional Mamba backbone; and a linear head
(Fig.~\ref{fig:arch}). DS-MPA is embedded inside each stage.

\begin{figure*}[t]
\centering
\includegraphics[width=\linewidth]{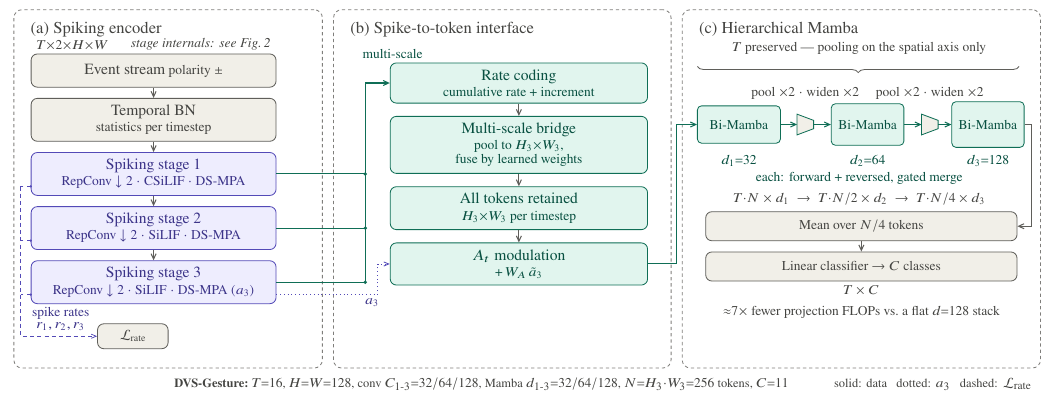}
\caption{Overview of \name (DVS-Gesture). \textbf{(a)} Three RepSpikeStage blocks
(TDM $+$ reparameterizable convolutions $+$ DS-MPA, stride-2) emit the final
attention map $a_3$ and firing rates. \textbf{(b)} Multi-scale rates/increments
are fused (MR-S3A) into a fixed-width token sequence.
\textbf{(c)} Three gated bidirectional Mamba blocks process the tokens, each
transition pooling the spatial axis $\times 2$ and doubling width.}
\label{fig:arch}
\end{figure*}

\subsection{Reparameterized Spiking Stage}\label{sec:repspike}
At training time each backbone convolution is a sum of seven parallel branches
(3$\times$3, 1$\times$1, 1$\times$3, 3$\times$1, 1$\times$1$\to$3$\times$3,
identity, avg-pool), each with its own BatchNorm (Fig.~\ref{fig:stage}b). This
sum fuses exactly to one $3{\times}3$ kernel at inference.

\begin{proposition}[Branch fusion]\label{prop:fusion}
Let $s_b=\gamma_b/\sqrt{\sigma_b^2+\varepsilon}$ be each branch's BN scale.
With $W^\star=\sum_b\mathrm{Pad}_{3\times3}(s_bW_b)$ and
$b^\star=\sum_b(\beta_b-\mu_b s_b)$, we have
$\mathrm{RepConv}(x)\equiv W^\star * x + b^\star$ for all $x$.
\end{proposition}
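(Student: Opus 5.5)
The plan is to use linearity. At inference every BatchNorm is a fixed per-channel affine map, each branch is a linear map of $x$, and the sum of affine maps is affine. So it suffices to write every branch in the form $W_b' * x + b_b$, with $W_b'$ a $3{\times}3$ kernel and $b_b$ a per-channel bias, and then add the branches.

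The first step is BN folding. In eval mode, branch $b$ computes
\[
\mathrm{BN}_b(z)=s_b\,(z-\mu_b)+\beta_b,\qquad z=W_b*x .
\]
By linearity of convolution in the kernel, this equals $(s_bW_b)*x+(\beta_b-\mu_b s_b)$, where the per-output-channel scale $s_b$ multiplies the corresponding output filter of $W_b$. This gives the bias term $b^\star$ directly.

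The second step embeds each branch kernel into $3{\times}3$ support. The $1{\times}1$, $1{\times}3$ and $3{\times}1$ kernels are zero-padded so that they sit centred. With stride, padding and dilation chosen so that all branches produce aligned outputs (the usual ``same'' padding convention, which I would state as a standing assumption), a centred zero-padded kernel gives the same output as the original. The identity branch becomes the $3{\times}3$ kernel with $\delta_{ij}$ at the centre; this requires matching input and output channels and stride $1$, and otherwise the branch is absent. The avg-pool branch becomes the kernel with entry $\tfrac19\delta_{ij}$ at every spatial position.

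The third step handles the $1{\times}1\to3{\times}3$ sequential branch, which I expect to be the main obstacle. The composition of two convolutions is a single convolution with kernel $W^{(2)}_{oc}*_{\text{sp}}W^{(1)}_{ci}$ summed over intermediate channels $c$. The difficulty is the inner BN and the padding between the two layers. If there is an inner BN, its bias passes through the $3{\times}3$ kernel as a constant map only in the interior. At the borders, zero padding makes the result differ unless the padding is filled with the bias value, as in the DBB construction. I would therefore either assume the padding is filled with the bias value, or read ``$\mathrm{Pad}_{3\times3}(s_bW_b)$'' for this branch as the already-composed kernel, with the inner bias folded into $\beta_b-\mu_b s_b$. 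I would state that caveat explicitly. Any such inner bias then contributes $\sum_{\text{spatial}}W^{(2)}\cdot\text{bias}$ to $b^\star$.

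Finally, summing the branches and using distributivity $\sum_b (A_b*x)=(\sum_b A_b)*x$ yields $\mathrm{RepConv}(x)=W^\star*x+b^\star$ for every $x$. No nonlinearity is involved, since the spike function acts after the sum. The BNTT variant only makes $s_b,\mu_b,\beta_b$ depend on $t$, which gives a per-timestep fused kernel by the same argument.
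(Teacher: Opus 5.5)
Your proposal is correct and follows the paper's proof essentially step for step: BN absorption into $s_bW_b$ and $\beta_b-\mu_bs_b$, centred zero-padding of the $1{\times}1$, $1{\times}3$, $3{\times}1$ kernels, the centred-delta and constant-$\tfrac19\delta_{ij}$ kernels for the identity and avg-pool branches, associativity for the $1{\times}1\to3{\times}3$ branch, and distributivity to sum. Your alignment and inner-bias caveats are more careful than the paper, whose sequential branch carries a single BN after the composed kernel so the border issue does not arise there. Also, the paper's branch BNs are time-invariant 2D BNs over the merged $B{\times}T$ batch, with BNTT applied downstream of the sum, so the result is one static fused kernel rather than a per-timestep one.
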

Proof and the $1{\times}1{\to}3{\times}3$ composition are in
Appendix~\ref{app:proofs}. The seven branches have different kernel areas, so the
saving is $26/9\approx2.9\times$ per unit. Branch
BNs are standard 2D normalizations across the merged $B{\times}T$ batch and
absorb completely into the single static $3{\times}3$ kernel $W^\star$ and bias
$b^\star$; BNTT sits downstream of the sum as a per-timestep affine vector
($2Tc_{\text{out}}$ scalars), so fusion does not replicate weights across time.
Each stage chains two TDM$\to$RepConv$\to$BNTT$\to$neuron units with DS-MPA
bridging the two neurons:
\begin{align}
y_1&=\mathrm{BNTT}(\mathrm{RepConv}_1(\mathrm{TDM}_{\lambda_1}(x))),\quad(S_1,U_1)=\mathcal{N}_1(y_1),\\
y_2&=\mathrm{BNTT}(\mathrm{RepConv}_2(\mathrm{TDM}_{\lambda_2}(S_1))),\quad
\hat y_2=\mathrm{DS\text{-}MPA}(U_1,S_1^{\text{prev}},y_2),\ (S_2,U_2)=\mathcal{N}_2(\hat y_2).
\end{align}

\subsection{Temporal Decoupled Modulation (TDM)}\label{sec:tdm}
For a per-channel $\lambda\in\RR^C$, TDM is the first-order filter
$\mathrm{TDM}_\lambda(x)_{t}=(1+\lambda_c)x_t-\lambda_c x_{t-1}$ (with
$x_0{:=}0$), adding one scalar per input channel per insertion ($322$ total, $0.04\%$ of the
model). It admits three views, with full forms in Appendix~\ref{app:method};
we state the two load-bearing ones here. As a learnable discrete derivative
$\tilde x_t=x_t+\lambda_c(x_t-x_{t-1})$, it is exactly the identity at
$\lambda{=}0$, so zero-initialization makes TDM inert at iteration zero and
training starts from the TDM-free network rather than a perturbed one. As a
$z$-domain filter $H_c(z)=(1+\lambda_c)-\lambda_c z^{-1}$, each channel learns
its own bandwidth: $|H_c(e^{j\omega})|^2=1+2\lambda_c(1+\lambda_c)(1-\cos\omega)$
is high-pass for $\lambda_c\!>\!0$ (Prop.~\ref{prop:tdm-spec},
Appendix~\ref{app:method}; proof in Appendix~\ref{app:proofs}). The deployed model learns exactly this depth
profile. The first stage's coefficient reaches $\lambda\approx0.72$, high-pass
and motion-emphasizing, while the deepest stage settles near $\lambda\approx0$,
close to identity, consistent with early-pathway motion sensitivity. TDM
also provides a surrogate-free temporal gradient path parallel to the spike
path (gradient-decoupling lemma, Appendix~\ref{app:proofs}).

\paragraph{Heterogeneous neurons.}
Stage 1 uses CSiLIF, a complex-valued sigmoid LIF whose per-channel pole
$\rho_c=-e^{\ell_c}+j\omega_c\in\CC^-$ turns a real LIF's low-pass response
into a damped oscillation, a learnable band-pass channel. The form
$-e^{\ell_c}$ keeps it stable for every $\ell_c$ without clipping (readout
$U=2\Real(m)$; Appendix~\ref{app:method}). Stages 2--3 use the real-valued
SiLIF. Neither neuron is our contribution: SiLIF follows
PLIF~\citep{fang2021plif}, CSiLIF the resonate-and-fire
idea~\citep{izhikevich2001resonate}. Our finding is that the assignment
CSiLIF--SiLIF--SiLIF attains the highest accuracy of the eight sweep
configurations (Appendix~\ref{app:exp}), with resonance helping most at the
high-resolution first stage.

\subsection{Dual-Stream Membrane-Potential Attention}\label{sec:dsmpa}
DS-MPA is a per-timestep linear attention with queries/keys from the current
membrane $U_1$ and values from the previous spike tensor $S_1^{\text{prev}}$.
With the non-negative feature map $\phi(z)=\mathrm{ELU}(z)+1$,
\begin{align}
A&=\frac{\phi(Q)\,(\phi(K)^{\!\top}V)}{\phi(Q)\,(\phi(K)^{\!\top}\mathbf1)+\varepsilon}\in\RR^{N\times C},\label{eq:dsmpa-attn}\\
\hat X &= X+\gamma\odot(A\odot X),\label{eq:dsmpa-mod}
\end{align}
with a per-channel gate $\gamma$ (init $1$). Dimensionally, the $N{=}HW$
spatial positions are the sequence axis and the $C$ channels the feature axis.
The key--value contraction $\phi(K)^{\!\top}V\in\RR^{C\times C}$ is over the
sequence axis at cost $\mathcal{O}(NC^2)$, so DS-MPA mixes \emph{channels}
conditioned on membrane content rather than tokens, and the normalizer keeps
$A$ bounded. Width does not inflate that $O(NC^2)$ term here, since $N$
shrinks $16\times$ as $C$ grows $4\times$. Isolated, DS-MPA is $7.9\%$ of a
step, $4.3\%$ of parameters and $7.4\%$ of the deployed count; a $3.3\times$
cheaper gate recovers $5.5\%$ of step time and a $4\times$ low-rank cut
(which provably retains the bound below) only $0.875\!\to\!0.827$\,M, both
against the $3.16$\,pp DS-MPA is worth on N-Caltech101, so we deploy the full
operator (Tables~\ref{tab:dsmpa-cost} and~\ref{tab:q6-lowrank}, the latter
untrained; pseudo-code in Fig.~\ref{fig:dsmpa-pseudo}).
\begin{proposition}[Boundedness]\label{prop:bounded}
If $\phi\ge0$ element-wise, then $A_{i,\cdot}$ is a convex combination of the
\emph{rows of} $V$ contracted by $\eta_i=d_i/(d_i{+}\varepsilon)\in[0,1)$, so
$\min(0,\min_j V_{j,c}) \le A_{i,c} \le \max(0,\max_j V_{j,c})$; unprojected
binary $V$ gives $0\le A_{i,c}\le1$.
\end{proposition}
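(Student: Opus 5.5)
The plan is to expand the matrix products in \eqref{eq:dsmpa-attn} row by row and read off the convex-combination structure directly. Fix a query row $i$ and write $w_{ij}=\phi(Q_{i,\cdot})\,\phi(K_{j,\cdot})^{\top}=\sum_{k}\phi(Q_{i,k})\phi(K_{j,k})$. By associativity, the numerator entry is $\sum_j w_{ij}V_{j,c}$, and the normalizer is $d_i:=\sum_j w_{ij}$, with $\phi(K)^{\!\top}\mathbf1$ contracting over the sequence axis. Since $\phi\ge0$ element-wise, every $w_{ij}\ge0$ and hence $d_i\ge0$. This is the only place the hypothesis enters, and it is the step I will state most carefully: the bracketing $\phi(Q)(\phi(K)^{\!\top}V)$ versus $(\phi(Q)\phi(K)^{\!\top})V$ must be checked to give the same weights $w_{ij}$.

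Next I split into two cases. If $d_i=0$, then all $w_{ij}=0$, so $A_{i,c}=0$, which lies trivially in $[\min(0,\min_j V_{j,c}),\max(0,\max_j V_{j,c})]$. If $d_i>0$, set $p_{ij}=w_{ij}/d_i$. These satisfy $p_{ij}\ge0$ and $\sum_j p_{ij}=1$, and
\[
A_{i,c}=\frac{d_i}{d_i+\varepsilon}\sum_j p_{ij}V_{j,c}=\eta_i\,\bar V_{i,c},
\]
where $\bar V_{i,c}$ is a convex combination of the column entries $V_{j,c}$, with the same weights for every $c$. That shared weighting is what makes it a combination of \emph{rows}. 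Because $\varepsilon>0$, we get $\eta_i\in[0,1)$, and the $d_i=0$ case gives $\eta_i=0$ consistently.

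For the bound, note that $\min_j V_{j,c}\le\bar V_{i,c}\le\max_j V_{j,c}$. Multiplying by $\eta_i\in[0,1]$ gives $\eta_i\bar V_{i,c}$, which is a convex combination of $0$ and $\bar V_{i,c}$: explicitly, $(1-\eta_i)\cdot 0+\eta_i\bar V_{i,c}$. It therefore lies between $\min(0,\bar V_{i,c})$ and $\max(0,\bar V_{i,c})$, and these are in turn bounded by $\min(0,\min_j V_{j,c})$ and $\max(0,\max_j V_{j,c})$. This explains why $0$ enters the hull: the $\varepsilon$-contraction pulls $A$ toward the origin.

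Finally, for unprojected binary values, $V_{j,c}\in\{0,1\}$, so $\min(0,\min_j V_{j,c})=0$ and $\max(0,\max_j V_{j,c})\le1$, which gives $0\le A_{i,c}\le1$. No step is a real obstacle. The only care needed is the $d_i=0$ edge case and noting that $\varepsilon>0$ makes the expression well defined.
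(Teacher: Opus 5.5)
Your proof is correct and follows essentially the same route as the paper's: write $w_{ij}=\phi(Q_i)^\top\phi(K_j)\ge0$, normalize to $p_{ij}=w_{ij}/d_i$, factor out $\eta_i=d_i/(d_i+\varepsilon)\in[0,1)$, and read off the contracted convex combination and the bounds. Your explicit handling of the $d_i=0$ case is a small improvement, since the paper divides by $d_i$ without comment; that case cannot occur for $\phi=\mathrm{ELU}+1>0$, but the proposition only assumes $\phi\ge0$.
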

The output can therefore neither diverge nor be driven to a constant by a
saturating scalar; there is none, and the bound is \emph{dynamic} rather than
a fixed $[0,1]$ interval (proof in Appendix~\ref{app:proofs}). It holds for any $V$ and any
parameterization of $W_Q,W_K,W_V$, so the reduced variants of
Table~\ref{tab:q6-lowrank} inherit it. We do not constrain $W_V$ to make $V$ non-negative. A parameter-matched sigmoid gate $A=\sigma(QK^\top)V$ instead saturates: $\sigma'(z)\to0$ compounds with surrogate-gradient attenuation and starves the early stages, whereas $\phi'(z)>0$ everywhere does not (derivation in Appendix~\ref{app:method}). We have not trained that control, so this argument is structural rather than measured. DS-MPA costs $\mathcal{O}(NC^2)$ time against $\mathcal{O}(N^2C)$ for softmax attention; learned attention maps and per-class accuracy are in Appendix~\ref{app:attn}.

\subsection{MR-S3A Bridge}\label{sec:mrs3a}
Each stage's spikes are encoded as a 2-channel cumulative-rate$+$first-difference
summary, pooled to the deepest resolution and fused by softmax-weighted
multi-scale projection into the bridge token space. The bridge width
$d_{\text{m}}$ is a \emph{fixed} hyperparameter, independent of $H{\times}W$.
This is what decouples the deployed parameter count from resolution, since the
Mamba blocks downstream are $\mathcal{O}(d_{\text{m}}^2)$ and independent of
sequence length. All $T H_3 W_3$ tokens enter the scan. We apply no token
pruning, and the parameter invariance holds without it
(Table~\ref{tab:resolution-scaling}). A learnable projection additionally
injects the stage-3 DS-MPA attention map into the token sequence
(pseudo-code in Fig.~\ref{fig:mrs3a-pseudo}, exact modulation term in
Appendix~\ref{app:method}).

\subsection{Training Objective}\label{sec:loss}
The loss is $\mathcal{L}=\mathcal{L}_{\text{TET}}+\lambda_{\text{SGC}}\mathcal{L}_{\text{SGC}}+\lambda_{\text{L1}}\bar r$.
Beyond TET~\citep{deng2022tet}, a Surrogate-Gradient Consistency term
$\mathcal{L}_{\text{SGC}}=\mathrm{MSE}(z^{\text{disc}},z^{\text{cont}})$
penalizes the gap between the hard-spike pass and a soft-spike pass
($\Theta\!\to\!\sigmoid(\alpha\cdot)$) through shared weights (symmetric, not
distillation; dual-branch schematic in Appendix~\ref{app:method}), and a small
spike-rate L1 ($\lambda_{\text{L1}}{=}10^{-4}$) holds a depth-dependent operating
point $r_\ell\in[0.015,0.19]$. The full composite loss is used for
DVS-Gesture and N-MNIST; CIFAR10-DVS and N-Caltech101 use a TET-only
objective ($\lambda_{\text{SGC}}{=}\lambda_{\text{L1}}{=}0$), and
DailyDVS-200 disables SGC while retaining the rate penalty
(Table~\ref{tab:hyper}).

\section{Parameter and Deployment Cost}\label{sec:complexity}

\paragraph{Parameters.}
Each RepConv fuses from $26c_{\text{in}}c_{\text{out}}$ training weights to
$9c_{\text{in}}c_{\text{out}}$ at inference (Prop.~\ref{prop:fusion},
derivation in Appendix~\ref{app:method}), a $26/9 \approx 2.89\times$ reduction
per convolution unit. Across the full backbone that is $2.76\times$,
$1.014\to0.368$\,M, not a naive $7\times$. Fusion leaves the hierarchical Mamba, bridge and
head untouched. The hierarchy is a parameter-economy device, not an
operation-count one: a bidirectional block costs $\mathcal{O}(d^2)$, so a flat
three-block stack at $d{=}128$ would cost $2.21\times$ the hierarchy's
$361{,}760$ ($1.31$ vs.\ $0.875$\,M deployed), though at a \emph{matched}
budget it is not faster.
We quote two deployed counts: $869{,}593\approx\textbf{0.870\,M}$ with a
10-class head at $T{=}10$, and $875{,}122\approx\textbf{0.875\,M}$ for
DVS-Gesture's 11-class head at $T{=}16$, down from $1.522$\,M at training
time by branch fusion (Table~\ref{tab:param-fusion}). They differ only in $T$
and class count, neither of which depends on $H{\times}W$; the
head-independent core ($868{,}303$) and the full reconciliation are in
Appendix~\ref{app:method}. At fixed $T$ and widths no parameter scales with
$H{\times}W$: the Mamba blocks are $O(d^2)$ and independent of sequence length,
which we verify by direct instantiation from $34^2$ through $224^2$
($43\times$ range in pixel area).
Mamba-Spike instead flattens the grid into one dense
$\texttt{Linear}(C\lfloor H/4\rfloor\lfloor W/4\rfloor, d)$, exactly quadratic
in $\lfloor H/4\rfloor$ and independent of capacity. Fig.~\ref{fig:interface}
plots both regimes.

\begin{figure}[t]
\centering
\includegraphics[width=\linewidth]{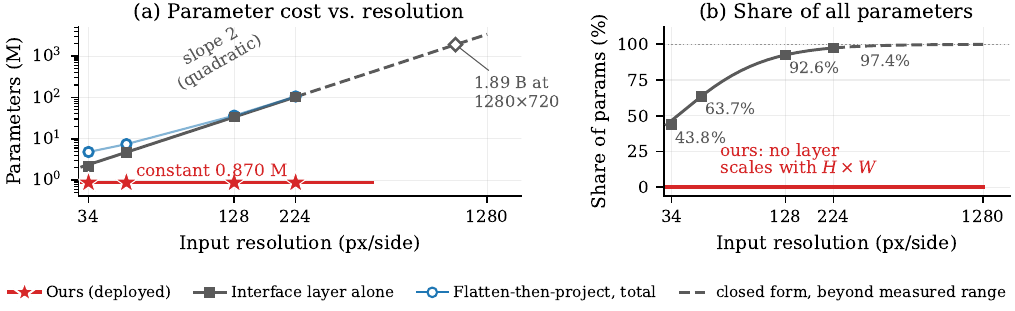}
\caption{\textbf{The SNN$\to$SSM interface's parameter cost is quadratic in
sensor resolution; \name's is constant.} Markers are \emph{instantiated and
counted} (Appendix~\ref{app:ressweep}); the curve is
Eq.~\eqref{eq:flatten-project}, which reproduces all four exactly, dashed where
extrapolated. At the $1280{\times}720$ of an HD event sensor that one layer
reaches $1.89$\,B parameters (${\approx}7$\,GiB \texttt{fp32}); \name is tested flat
at $0.870$\,M from $34^2$ to $224^2$.}
\label{fig:interface}
\end{figure}

Constant parameters do not mean constant cost. Over the same range peak
activation memory grows ${\sim}30\times$ and throughput falls
(Table~\ref{tab:q7-resolution-cost}), so weight storage is decoupled from
resolution but total memory is not. Measured FLOP/MACs and their $k,T,C$
scaling are in Table~\ref{tab:flops}: $4.50$\,GMACs deployed, and $8\times$ the
token budget costs only $11\%$ more compute.

\paragraph{Accumulate-only inference by linear commutation.}\label{sec:ac-inference}
A linear temporal filter commutes with the convolution that follows it, and the
pair refactors exactly into two static kernels over the filter's
\emph{original} operands, which for TDM are binary spikes. Run sequentially
TDM emits $\{-\lambda_c,0,1,1+\lambda_c\}$ and a GPU evaluates dense MACs
($4.50$\,GMACs deployed); a neuromorphic target evaluates this instead:
\begin{proposition}[Dual-Kernel AC Inference]\label{prop:ac-tdm}
Let $W^\star\in\RR^{C_{\text{out}}\times C_{\text{in}}\times 3\times 3}$ and $b^\star\in\RR^{C_{\text{out}}}$ be the fused RepConv parameters, and $\lambda\in\RR^{C_{\text{in}}}$ be the TDM parameter. With scaled static kernels $W_a^\star = W^\star \cdot \diag(1+\lambda)$ and $W_b^\star = W^\star \cdot \diag(\lambda)$, the composition exactly satisfies:
\begin{equation}\label{eq:ac-commutation}
\mathrm{Conv}(W^\star, \mathrm{TDM}_\lambda(x))_t \;=\; W_a^\star * x_t - W_b^\star * x_{t-1} + b^\star.
\end{equation}
\end{proposition}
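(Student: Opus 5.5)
The identity follows from two facts: convolution is linear in its input, and a per-input-channel scalar can be absorbed into the kernel along the $C_{\text{in}}$ axis. I interpret $\mathrm{Conv}(W^\star,\cdot)$ as the affine map $z\mapsto W^\star * z + b^\star$ of Proposition~\ref{prop:fusion}, applied independently at each time step $t$. The TDM filter acts only along the time axis and per channel. Because of this, it never interacts with the spatial sum inside the convolution, and the whole argument reduces to a channel-wise bookkeeping step.

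First, substitute the definition $\mathrm{TDM}_\lambda(x)_t=(1+\lambda_c)x_t-\lambda_c x_{t-1}$, with $x_0:=0$. In operator form this reads $\mathrm{TDM}_\lambda(x)_t=\diag(1+\lambda)\,x_t-\diag(\lambda)\,x_{t-1}$, where $\diag$ acts on the channel index.

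Second, apply linearity of $W^\star *(\cdot)$. The bias is added once, since the map is affine and not linear; this yields
\[
W^\star * \bigl(\diag(1+\lambda)x_t\bigr) - W^\star * \bigl(\diag(\lambda)x_{t-1}\bigr) + b^\star .
\]

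Third, show the channel-scaling lemma: for any $D=\diag(d)$ with $d\in\RR^{C_{\text{in}}}$, we have $W^\star *(Dx) = (W^\star D) * x$, where $(W^\star D)_{o,c,k,l}=W^\star_{o,c,k,l}\,d_c$. Write out the convolution sum, $(W^\star * z)_{o}(p)=\sum_{c}\sum_{k,l}W^\star_{o,c,k,l}\,z_c(p+(k,l))$, and note that $d_c$ depends only on the summation index $c$, not on the spatial offset or the padding. It therefore moves inside the kernel. Zero padding is preserved because $d_c\cdot 0=0$, and the same holds for stride. Applying this lemma with $d=1+\lambda$ and $d=\lambda$ gives exactly $W_a^\star * x_t - W_b^\star * x_{t-1}+b^\star$.

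Finally, check the boundary case $t=1$. There $x_0=0$, so the $W_b^\star$ term vanishes on both sides and the identity still holds.

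The only step that needs care is the convention for the bias. If one instead read $\mathrm{Conv}$ as applied to the TDM output including a bias term inside a linear TDM, the bias would acquire a factor $(1+\lambda)-\lambda=1$. In that reading the bias is also correct, which I would note as a remark.

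I would also remark that the right-hand side applies static kernels to the original operands $x_t$ and $x_{t-1}$. When $x$ is a binary spike tensor, both convolutions therefore have binary inputs, which is what licenses the AC accounting.
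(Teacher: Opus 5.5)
Your proposal is correct and follows the paper's proof essentially step for step. You substitute $(1+\lambda_c)x_{t,c}-\lambda_c x_{t-1,c}$ into the channel-wise sum $\sum_c W^\star_{k,c} * \tilde x_{t,c} + b^\star_k$, split it by linearity with the bias added once, absorb the per-input-channel scalars into the kernel along the $C_{\text{in}}$ axis, and use $x_0=0$ at $t=1$. Your explicit channel-scaling lemma (with the padding and stride check) is a more careful statement of the paper's index-level substitution. The bias-convention remark is unnecessary, since $b^\star$ is indexed by output channels and never passes through the TDM.
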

The identity applies exactly when $x_t,x_{t-1}$ are \emph{original binary
spikes}, where Eq.~\eqref{eq:ac-commutation} executes as \textbf{two
accumulate-only (AC) convolutions} plus an element-wise subtraction. It holds
for five of six backbone convolutions, failing only for the first, whose
operand is the binned event tensor after entry normalization and is
real-valued; that one stays MAC, and is $1.54\%$ of backbone convolution
operations at $128^2$, leaving $98.5\%$ AC-eligible. DS-MPA, the MR-S3A projections, the complex CSiLIF state and the
selective scan remain dense and uncosted for neuromorphic silicon. At 45\,nm ($E_{\mathrm{MAC}}{=}4.6$\,pJ, $E_{\mathrm{AC}}{=}0.9$\,pJ)
the two AC convolutions cost $1.8\bar r$\,pJ: at most $1.8$\,pJ, and
$0.09$--$0.36$\,pJ over the measured $\bar r\in[0.05,0.20]$. That counts
synaptic operations for those layers only, against a dense MAC at $100\%$
activity, and excludes memory access. Proof in Appendix~\ref{app:proofs}.

\paragraph{Latency and memory.} Fusing the multi-branch graph cuts $B{=}8$
latency $1.53\times$. At $B{=}1$, the edge operating point, \name's footprint
is below the baseline's; at $B{=}8$ its activations are ${\sim}3.8\times$
larger, so it is slower and larger in total memory: the trade is weight
memory for activation memory (Appendix~\ref{app:latency}).

\section{Experiments}\label{sec:exp}

\paragraph{Setup.}
We evaluate on DVS-Gesture ($T{=}16$, $128^2$),
CIFAR10-DVS ($T{=}10$, $128^2$), N-Caltech101 ($T{=}10$, $128^2$, 101 classes),
DailyDVS-200~\citep{dailydvs200} ($T{=}10$, $224^2$, 200 action classes, official
subject-disjoint split, 14{,}773/3{,}180/4{,}093 clips) and
N-MNIST~\citep{orchard2015nmnist} ($T{=}10$, $34^2$), using each dataset's
standard split. All numbers are
single-pass top-1 (no test-time augmentation) after clean batch-norm
recalibration. The identical
protocol, schedule and split are used for the headline results and all ablations,
so every internal comparison is like-for-like. Every headline result is a five-seed measurement: DVS-Gesture
$98.18\pm0.31$ (best $98.48$), CIFAR10-DVS $81.28\pm0.11$, N-Caltech101
$85.49\pm0.18$, N-MNIST $99.55\pm0.02$, DailyDVS-200 $45.74\pm0.40$. Tables quote
the best run, matching how the published baselines report; $\sigma$ is used in
Section~\ref{sec:ablation} to say which ablation effects clear noise. Per-seed
values and the per-dataset $\sigma$-in-samples conversions are in
Appendix~\ref{app:seeds}. Full hyperparameters are in Appendix~\ref{app:exp}; we implement \name in PyTorch with the official
\texttt{mamba\_ssm} kernels, training on an NVIDIA A100 GPU. The full training
code is submitted as supplementary material.

\paragraph{Baselines.}
Mamba-Spike~\citep{li2024mambaspike} is the primary baseline, reported three
ways. The first is its published figures, which are for a larger configuration
with no parameter count stated. The other two run the \emph{released} code~\citep{mambaspike_code},
under its own protocol ($94.32$/$48.90$/$99.36\%$) and under ours
($95.83$/$65.30$/$99.49\%$). Our
protocol \emph{improves} it everywhere, by $16.40$\,pp on CIFAR10-DVS, so the
gap to the published numbers is not under-training on our part. All quoted
Mamba-Spike parameters are measured from that code ($36.25$\,M at $128^2$) and
do not pair with the published accuracies, as detailed in
Appendix~\ref{app:proto}. Secondary baselines (SpikFormer, SDT/V2, MS-ResNet,
QKFormer, SWformer, STAtten, RTFormer, TET-VGG) are quoted from their papers
and marked $^\ast$; those rows are cross-protocol.

\paragraph{Results.}
Table~\ref{tab:sota-all} reports all five benchmarks, and three patterns hold
across the four with published parameter counts. \name exceeds the
Mamba-Spike reproduction on every shared benchmark by $2.65$/$16.10$/$0.08$\,pp
(best-seed). The parameter ratio is benchmark-dependent: $41\times$ on
DVS-Gesture and CIFAR10-DVS ($36.25$\,M), but $1.2\times$ on N-MNIST, whose
$34^2$ input shrinks the baseline to $1.03$\,M. On CIFAR10-DVS,
N-Caltech101 and DailyDVS-200, \name is Pareto-optimal among published models
with reported parameter counts (Fig.~\ref{fig:pareto}). On DVS-Gesture, the
action-specific SpikMamba~\citep{chen2024spikmamba} reaches $99.01\%$ at
$0.18$\,M; \name achieves $98.48\%$ ($98.18\pm0.31\%$ mean, trailing by
${\sim}1$--$2$ clips on this 264-clip test set) while maintaining resolution-independent
parameters across both object and action datasets. Widening the stages shows the
deployed point is an operating choice, not a ceiling. On CIFAR10-DVS,
$[64,128,256]$ ($3.35$\,M) gives $82.80$ and $[128,256,512]$ ($13.14$\,M)
gives $83.40$, beating SPARTA ($13.80$\,M, $83.06$) but not QKFormer's
$84.00$ at $1.50$\,M. It leads on accuracy nowhere: SDT ($99.30$), QKFormer ($84.00$) and SWformer
($88.45$) remain ahead, so the claim is frontier position, not dominance. On
CIFAR10-DVS the published Mamba-Spike figure ($92.5$) far exceeds our
reproduction, for the protocol reasons of Appendix~\ref{app:proto}.
DailyDVS-200, at 200 classes and $224^2$ the largest scale we test, is nowhere
near solved: only one of the twelve entries in Table~\ref{tab:sota-all} clears
$50\%$ top-1, their median is $41.95$ and the ceiling $51.99$. \name reaches $45.74\pm0.40$ top-1 and $69.50\pm0.49$ top-5 at
$8.04$\,M: the best spiking method by $8.80$\,pp, fourth of twelve overall
($3.79$\,pp above that median), and Pareto-optimal among entries with a
published count, at $3.5\times$ smaller than Swin-T, $9.6\times$ than EvMamba
and $15.1\times$ than TimeSformer, which it beats outright. Width is a
capacity requirement at this scale, not an operating choice: the lite/wide
pair gives $42.41\pm0.34$ at $3.82$\,M against $45.74$ at $8.04$\,M
($\Delta{=}3.33$\,pp, Welch's $t{=}14.1$). Further per-benchmark detail,
including the top-5 asymmetry against TSM, is in Appendix~\ref{app:seeds}.

% Two-column, Backbone column dropped. Rationale: single-column at 44 rows was
% ~70% of a page, which meant LaTeX could not place it alongside the other §6-§7
% floats and it drifted past the references (measured: p21 of a 26-page build).
% Backbone is the widest column and the least load-bearing -- method names imply
% it -- and dropping it is what makes a two-column split fit without the
% minipages colliding. The master tree retains the full five-column version.
\begin{table}[t]
\centering
\caption{State-of-the-art comparison across all five event benchmarks
(inference-time parameters). $^\ast$ published; $^\ddagger$/$^\dagger$ the released
Mamba-Spike code under its own / our protocol; $^\S$ no parameter count is
attributable (Appendix~\ref{app:proto}). Among general-purpose backbones, \name
is smallest in three of four and Pareto-optimal across datasets
(action-specific SpikMamba~\cite{chen2024spikmamba} is smaller at $0.18$\,M
on DVS-Gesture), leading nowhere on accuracy: it concedes at most $2.9$\,pp
to models $1.7$--$41\times$ its size, $6.25$\,pp to the best DailyDVS-200
result. DailyDVS-200 baselines are frame-based video models with no spiking time-step, so their $T$ column reads ``--''. Best per block \underline{underlined}.}
\label{tab:sota-all}
\renewcommand{\arraystretch}{0.79}
\setlength{\tabcolsep}{3pt}
\scriptsize
\begin{minipage}[t]{0.48\linewidth}\centering
\begin{tabular}{lccc}
\toprule
Method & $T$ & Params & Acc.\ (\%) \\
\midrule
\multicolumn{4}{l}{\emph{DVS-Gesture} ($T{=}16$)} \\
TET~\cite{deng2022tet}                   & 16 & 3.66\,M  & 97.92$^\ast$ \\
SpikFormer~\cite{zhou2023spikformer}     & 16 & 2.57\,M  & 98.30$^\ast$ \\
SDT~\cite{yao2023sdt}                    & 16 & 2.57\,M  & \underline{99.30}$^\ast$ \\
MS-ResNet~\cite{hu2024msresnet}          & 16 & 11.20\,M & 98.61$^\ast$ \\
RTFormer~\cite{wang2024rtformer}         & 16 & ---      & 98.61$^\ast$ \\
SPARTA~\cite{jang2025sparta}             & 20 & 13.80\,M & 98.78$^\ast$ \\
SpikMamba~\cite{chen2024spikmamba}       & 16 & 0.18\,M  & 99.01$^\ast$ \\
Mamba-Spike~\cite{li2024mambaspike}$^\ast$ & 16 & ---$^\S$ & 97.80 \\
Mamba-Spike~\cite{li2024mambaspike}$^\ddagger$ & 16 & 36.25\,M & 94.32 \\
Mamba-Spike~\cite{li2024mambaspike}$^\dagger$ & 16 & 36.25\,M & 95.83 \\
\textbf{\name (ours)}                    & 16 & \textbf{0.88\,M} & \textbf{98.48} \\
\midrule
\multicolumn{4}{l}{\emph{CIFAR10-DVS} ($T{=}10$)} \\
TET~\cite{deng2022tet}                   & 10 & 3.66\,M  & 83.17$^\ast$ \\
SpikFormer~\cite{zhou2023spikformer}     & 10 & 2.57\,M  & 80.90$^\ast$ \\
SDT V2~\cite{yao2024sdtv2}               & 10 & 4.34\,M  & 82.00$^\ast$ \\
SWformer~\cite{fang2024swformer}         & 10 & 2.05\,M  & 82.90$^\ast$ \\
RTFormer~\cite{wang2024rtformer}         & 10 & ---      & 83.60$^\ast$ \\
QKFormer~\cite{zhou2024qkformer}         & 16 & 1.50\,M  & 84.00$^\ast$ \\
SPARTA~\cite{jang2025sparta}             & 16 & 13.80\,M & 83.06$^\ast$ \\
STAtten~\cite{lee2025statten}            & 10 & 2.57\,M  & 83.90$^\ast$ \\
SpikePool~\cite{lee2025spikepool}        & 16 & 0.55\,M  & 80.20$^\ast$ \\
SpikePool~\cite{lee2025spikepool}        & 16 & 2.19\,M  & 82.70$^\ast$ \\
Mamba-Spike~\cite{li2024mambaspike}$^\ast$ & 10 & ---$^\S$ & \underline{92.50} \\
Mamba-Spike~\cite{li2024mambaspike}$^\ddagger$ & 10 & 36.25\,M & 48.90 \\
Mamba-Spike~\cite{li2024mambaspike}$^\dagger$ & 10 & 36.25\,M & 65.30 \\
\textbf{\name (ours)}                    & 10 & \textbf{0.87\,M} & \textbf{81.40} \\
\bottomrule
\end{tabular}
\end{minipage}\hfill
\begin{minipage}[t]{0.48\linewidth}\centering
\begin{tabular}{lccc}
\toprule
Method & $T$ & Params & Acc.\ (\%) \\
\midrule
\multicolumn{4}{l}{\emph{N-Caltech101} ($T{=}10$, 101 classes)} \\
SDT~\cite{yao2023sdt}                    & 10 & 2.57\,M  & 82.50$^\ast$ \\
SDT V2~\cite{yao2024sdtv2}               & 10 & 4.34\,M  & 83.70$^\ast$ \\
STAtten~\cite{lee2025statten}            & 10 & 2.57\,M  & 84.25$^\ast$ \\
SWformer~\cite{fang2024swformer}         & 10 & 2.05\,M  & \underline{88.45}$^\ast$ \\
SpikePool~\cite{lee2025spikepool}        & 10 & 0.55\,M  & 81.62$^\ast$ \\
SpikePool~\cite{lee2025spikepool}        & 10 & 2.19\,M  & 85.01$^\ast$ \\
\textbf{\name (ours)}                    & 10 & \textbf{0.88\,M} & \textbf{85.54} \\
\midrule
\multicolumn{4}{l}{\emph{DailyDVS-200} ($T{=}10$, 200 classes, $224{\times}224$)} \\
C3D~\cite{dailydvs200}          & -- & 147.2\,M & 21.99$^\ast$ \\
ESTF~\cite{dailydvs200}         & -- & 46.7\,M  & 24.68$^\ast$ \\
SDT~\cite{dailydvs200}          & -- & ---      & 35.43$^\ast$ \\
SpikFormer~\cite{dailydvs200}   & -- & ---      & 36.94$^\ast$ \\
TSM~\cite{dailydvs200}          & -- & 24.3\,M  & 40.87$^\ast$ \\
SlowFast~\cite{dailydvs200}     & -- & 33.6\,M  & 41.49$^\ast$ \\
TimeSformer~\cite{dailydvs200}  & -- & 121.2\,M & 44.25$^\ast$ \\
Swin-T~\cite{dailydvs200}       & -- & 27.8\,M  & 48.06$^\ast$ \\
EvMamba~\cite{evmamba}          & -- & 76.5\,M  & 49.65$^\ast$ \\
DarkShake-DVS~\cite{darkshakedvs} & -- & ---    & \underline{51.99}$^\ast$ \\
\textbf{\name (ours)}           & 10 & \textbf{8.04\,M} & \textbf{45.74} \\
\quad{}\emph{lite}              & 10 & 3.82\,M & 42.41 \\
\midrule
\multicolumn{4}{l}{\emph{N-MNIST} ($T{=}10$, saturated)} \\
PLIF~\cite{fang2021plif}                 & 10 & 0.87\,M  & \underline{99.61}$^\ast$ \\
SpikFormer~\cite{zhou2023spikformer}     & 10 & 2.57\,M  & \underline{99.61}$^\ast$ \\
SDT~\cite{yao2023sdt}                    & 10 & 2.57\,M  & 99.55$^\ast$ \\
Mamba-Spike~\cite{li2024mambaspike}$^\ddagger$ & 10 & 1.03\,M  & 99.36 \\
Mamba-Spike~\cite{li2024mambaspike}$^\dagger$ & 10 & 1.03\,M  & 99.49 \\
\textbf{\name (ours)}                    & 10 & \textbf{0.87\,M} & \textbf{99.57} \\
\bottomrule
\end{tabular}
\end{minipage}
\end{table}

\begin{figure}[t]
\centering
\vspace{-6pt}
\includegraphics[width=\linewidth]{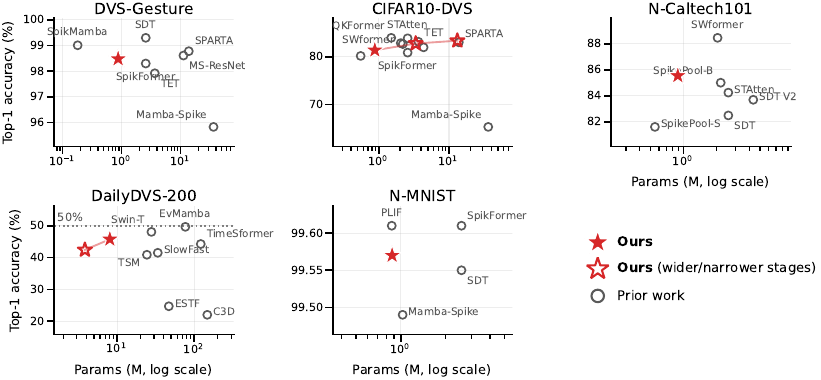}
\vspace{-6pt}
\caption{Accuracy vs.\ inference-time parameters (log scale). Panels plot
exactly the Table~\ref{tab:sota-all} entries reporting a parameter count and
label the frontier, so
the DailyDVS-200 panel's \emph{visible} ceiling is EvMamba ($49.65$) not the
benchmark best ($51.99$); its dotted line marks $50\%$ top-1. \name (filled
star) is Pareto-optimal on CIFAR10-DVS, N-Caltech101 and DailyDVS-200, but
not on DVS-Gesture, where SpikMamba is smaller and more accurate, nor on the
saturated N-MNIST. Hollow stars are rescaled variants: off-frontier on
CIFAR10-DVS, but on DailyDVS-200 the \emph{lite} config is itself a frontier
point.}
\vspace{-8pt}
\label{fig:pareto}
\end{figure}

\section{Ablations}\label{sec:ablation}
All ablations use the identical protocol, so each $\Delta$ characterizes the
deployed model directly. The DVS-Gesture $\Delta$ is against the five-seed
mean ($98.18$), never the best seed; the single-seed C10 and N-Caltech101
columns are differenced against Table~\ref{tab:sota-all}'s single-run
reference ($81.40$, $85.54$; five-seed means $81.28\pm0.11$, $85.49\pm0.18$).

\paragraph{Component-wise (Table~\ref{tab:ablation-main}).}
% wraptable, like Table 3: the surrounding paragraph is long enough to flow
% around it, which saves the vertical band a full-width float would occupy.
% Verify the render after any layout change; wrapfig (v3.6) does not defer an
% oversized object to the next page, so if the wrap region runs past the text
% block the table collides with the folio instead of moving.
\begin{wraptable}{r}{0.58\textwidth}
\vspace{-12pt}
\centering
\caption{Component-wise ablation and cross-dataset transfer. DVSG $\Delta$ uses the 5-seed mean; C10/NCal are single-seed.}
\label{tab:ablation-main}
\renewcommand{\arraystretch}{0.82}
\setlength{\tabcolsep}{3pt}
\scriptsize
\begin{tabular}{llcccc}
\toprule
\# & Ablation & DVSG ($\Delta$) & $\Delta$ C10 & $\Delta$ NCal \\
\midrule
A1 & --\,DS-MPA $\to$ id.        & 97.73 (--0.45) & --0.80 & \textbf{--3.16} \\
A2 & --\,RepConv $\to$ $3{\times}3$ & 97.73 (--0.45) & --0.60 & --1.94 \\
A3 & --\,TDM ($\lambda{=}0$)     & 98.11 (--0.07) & --1.40 & --2.43 \\
A4 & --\,MR-S3A $\to$ st.-3 only & 98.11 (--0.07) & --1.10 & --1.46 \\
\midrule
\multicolumn{2}{l}{\textbf{\name} (full)} & $\mathbf{98.18\pm0.31}$ & \textbf{81.40} & \textbf{85.54} \\
\bottomrule
\end{tabular}
\vspace{-8pt}
\end{wraptable}
Each headline component is removed and reverted to its closest fallback. On
DVS-Gesture the four effects span one to two misclassified clips. Anchored to
the five-seed mean $98.18\pm0.31$ rather than the best seed ($98.48$), the
removals are ${\approx}1.5\sigma$ for DS-MPA and RepConv and
${\approx}0.2\sigma$ for TDM and MR-S3A: none clears noise here. The
finer-grained benchmarks resolve all four against their own five-seed
$\sigma$: every component clears $5.5\sigma$ on CIFAR10-DVS
($\sigma{=}0.11$\,pp) and $8.0\sigma$ on N-Caltech101 ($\sigma{=}0.18$\,pp),
per-component values in Appendix~\ref{app:sigma}. DVS-Gesture is therefore
too saturated to resolve two of them. DS-MPA
dominates on N-Caltech101 ($-3.16$\,pp, the largest single effect anywhere),
and every $\Delta$ is largest on the hardest benchmark, so DVS-Gesture
understates their importance rather than contradicting it.

\paragraph{Sequence module, parameter-matched (Table~\ref{tab:seq-ablation}).}
% wraptable: narrow enough (3 cols) to let the "Is the Mamba itself
% justified?" prose flow around it and save the vertical space a full-width
% float would cost. Placed at the start of that paragraph in 07_ablation.tex
% so the wrap region lines up with the text meant to wrap it.
\begin{wraptable}{r}{0.5\textwidth}
\vspace{-10pt}
\centering
\caption{\textbf{Parameter-matched sequence-module ablation} (5 seeds/cell, mean$\,\pm\,$s.d.). Each arm is matched to within $0.1\%$ of hierarchical-Mamba (None is $23.7\%$ smaller). $^{***}p{<}0.001$, $^{*}p{<}0.05$, n.s.\ $p{\geq}0.05$.}
\label{tab:seq-ablation}
\renewcommand{\arraystretch}{0.85}
\setlength{\tabcolsep}{3pt}
\scriptsize
\begin{tabular}{lcc}
\toprule
Module & DVS-Gest.\ ($\Delta$) & N-Cal101 ($\Delta$) \\
\midrule
\textbf{Hier.\ Mamba (ours)} & $\mathbf{98.18\pm0.31}$ & $\mathbf{85.49\pm0.18}$ \\
Transformer   & $97.65\;(-0.53^{*})$ & $82.75\;(-2.75^{***})$ \\
GRU           & $97.73\;(-0.45^{*})$ & $82.48\;(-3.01^{***})$ \\
Conv1D        & $97.96\;(-0.22^{\textrm{n.s.}})$ & $82.24\;(-3.26^{***})$ \\
None          & $97.20\;(-0.98^{***})$ & $79.44\;(-6.05^{***})$ \\
\bottomrule
\end{tabular}
\vspace{-6pt}
\end{wraptable}

Replacing the sequence mixer alone at a fixed budget (five seeds each),
DVS-Gesture again cannot separate the arms: a depthwise Conv1D is
indistinguishable from Mamba ($-0.22$\,pp, $p{=}0.22$). N-Caltech101
separates the arms decisively: Transformer $-2.75$, GRU $-3.01$, Conv1D
$-3.26$\,pp, all $p<0.001$, deleting the mixer entirely costs $-6.05$\,pp,
confirming the DVS-Gesture ordering was noise. A flat, budget-matched Mamba
reaches comparable accuracy on CIFAR10-DVS and N-Caltech101, so the hierarchy is an
efficiency choice, not an accuracy one.

\paragraph{Sweeps (Appendix~\ref{app:exp}).} CSiLIF--SiLIF--SiLIF achieves the top accuracy among eight neuron mixes; accuracy plateaus at $T{=}16$; and performance is insensitive to $\lambda_{\text{L1}}$ across four orders of magnitude.

\section{Discussion and Conclusion}\label{sec:discussion}

Where the parameters go in a Spiking--Mamba hybrid is settled at the
interface: in the baseline we profile, $92\%$ of them sit in the one projection
joining front-end to scan, quadratic in resolution. A fixed-width
bridge holds the deployed count at $0.870$\,M across a $43\times$ area range.
That point covers the object and gesture benchmarks; DailyDVS-200 needs
width, and its $8.80$\,pp lead over spiking methods is at $8.04$\,M. The diagnosis
rests on one released model, so we scope it to flatten-then-project bridges;
swapping that projection for a pooled one, nothing else changed, costs no
accuracy (Appendix~\ref{app:swap}). A front-end this small can carry
attention at all only because DS-MPA is bounded by construction
(Proposition~\ref{prop:bounded}), its normalizer unable to collapse the way a
saturating gate does, and removing it costs more than removing anything else
we ablate ($-3.16$\,pp on N-Caltech101).
Proposition~\ref{prop:ac-tdm} restores binary operands to all but the first
backbone convolution, and the commutation it rests on holds for any linear
temporal filter placed before a convolution.

\subsection*{Reproducibility statement}
Splits, protocols and per-dataset hyperparameters are in
Section~\ref{sec:exp} and Appendix~\ref{app:hyper}; the backbone
specification and all proofs are in Appendices~\ref{app:mamba-spec}
and~\ref{app:proofs}. Headline accuracies are five-seed measurements, with
per-seed values in Appendix~\ref{app:seeds}. The resolution sweep behind
Fig.~\ref{fig:interface} is in Appendix~\ref{app:ressweep}. The training
code, covering all five benchmarks, is public at
\url{https://github.com/MuhiminOsim/RIPE-MambaSpike}.
It ships with the released Mamba-Spike code~\citep{mambaspike_code} and both
of the scripts behind its Table~\ref{tab:sota-all} rows, one running the
authors' own protocol ($^\ddagger$) and one running ours ($^\dagger$), so the
$48.90$ and $65.30$ CIFAR10-DVS figures can be reproduced directly rather than
taken on trust. The pooled-bridge control of Appendix~\ref{app:swap} is
included on the same footing.

\bibliographystyle{iclr2027_conference}
\bibliography{references}

\appendix
\section{Proofs of Propositions}\label{app:proofs}

\subsection{Proof of Proposition~\ref{prop:fusion} (Branch Fusion)}
We show that the seven-branch training-time RepConv of
Eq.~\eqref{eq:rep-train} is algebraically equivalent to the single
$3{\times}3$ convolution-plus-bias defined by Eqs.~\eqref{eq:fuse-W} and
\eqref{eq:fuse-b}. The argument has three parts: (i) BN absorption,
(ii) reduction of asymmetric and identity branches to $3{\times}3$
kernels, and (iii) composition of the sequential branch.

\paragraph{BN absorption.}
For any branch $b \in \mathcal B$ with weight $W_b$ and BatchNorm
parameters $(\gamma_b, \beta_b, \mu_b, \sigma_b^2, \varepsilon)$, the
affine action of BN on the convolution output is
\begin{align}
\mathrm{BN}_b(W_b * x)
&= \gamma_b \cdot \frac{W_b * x - \mu_b}{\sqrt{\sigma_b^2 + \varepsilon}}
   + \beta_b \nonumber\\
&= s_b\,(W_b * x) + (\beta_b - s_b\,\mu_b),
\end{align}
where $s_b = \gamma_b / \sqrt{\sigma_b^2 + \varepsilon}$ and we have used
the fact that running mean $\mu_b$ is a constant per output channel and
therefore acts as a bias. Each branch thus decomposes into a
\emph{linear} convolution with effective weight $s_b W_b$ and an additive
bias $\beta_b - s_b \mu_b$. Importantly, each internal branch batch-normalization
$\mathrm{BN}_b$ is a standard 2D BatchNorm evaluated across the combined batch and
temporal dimensions ($B \times T$), computing time-invariant channel statistics
$(\mu_b, \sigma_b^2)$ and affine parameters $(\gamma_b, \beta_b)$. Consequently,
the absorbed weights $W^\star$ and bias $b^\star$ are strictly static and time-independent,
yielding a single static $3\times3$ kernel for deployment. The temporal
batch-normalization (BNTT) is applied exclusively \emph{downstream} of the fused
convolution (i.e., $\mathrm{BNTT}_t(\mathrm{Conv}(W^\star, \cdot)_t)$) as an elementwise
affine transformation across timesteps, and does not interfere with the spatial kernel fusion.

\paragraph{Asymmetric and identity branches.}
The asymmetric kernels $1{\times}1$, $1{\times}3$ and $3{\times}1$ are
zero-padded to $3{\times}3$ support:
\begin{align}
\widetilde W^{1\times 1} &= \mathrm{Pad}_{[1,1,1,1]}(W^{1\times 1}),\\
\widetilde W^{1\times 3} &= \mathrm{Pad}_{[1,1,0,0]}(W^{1\times 3}),\\
\widetilde W^{3\times 1} &= \mathrm{Pad}_{[0,0,1,1]}(W^{3\times 1}),
\end{align}
where the padding tuples are $[\text{top},\text{bottom},\text{left},
\text{right}]$. These paddings preserve the convolutional action because
multiplication by zero is the identity for addition.

The identity branch is represented by the centered Kronecker delta kernel
$\mathbf I \in \RR^{c \times c \times 3 \times 3}$ with
$\mathbf I_{i,j,1,1} = \delta_{ij}$ and zero elsewhere. The AvgPool
branch is represented by the constant kernel
$\mathbf A_{i,j,p,q} = \tfrac{1}{9}\,\delta_{ij}$ for $p,q \in \{0,1,2\}$.

\paragraph{Sequential branch.}
The $1{\times}1{\rightarrow}3{\times}3$ sequential branch composes a
$1{\times}1$ projection with a $3{\times}3$ convolution. By associativity
of convolution, this is equivalent to a single $3{\times}3$ convolution
with weight
\begin{equation}
W^{\text{seq}}_{\text{fused}}
\;=\;
W^{3\times 3}_{\text{seq}} \;*\; \bigl(W^{1\times 1}_{\text{seq}}\bigr)^{\!\top},
\end{equation}
where the transpose swaps input and output channel axes for the
$1{\times}1$ kernel. Its BN is then absorbed as in the general case.

\paragraph{Summation.}
The seven branches act \emph{in parallel} on the same input $x$ and their
outputs are summed in Eq.~\eqref{eq:rep-train}. Because convolution is
distributive over addition,
\begin{equation}
\sum_b \widetilde W_b^{\text{eff}} * x + b_b^{\text{eff}}
\;=\;
\Bigl(\sum_b \widetilde W_b^{\text{eff}}\Bigr) * x + \sum_b b_b^{\text{eff}},
\end{equation}
which is exactly Eqs.~\eqref{eq:fuse-W}--\eqref{eq:fuse-b}. The equivalence
holds pointwise for every input $x$.\hfill$\square$

\subsection{Proof of Proposition~\ref{prop:tdm-spec} (TDM Spectral Shape)}
The transfer function of TDM is $H_c(z) = (1{+}\lambda_c) - \lambda_c\, z^{-1}$
(Eq.~\eqref{eq:tdm-z}). Evaluating on the unit circle $z = e^{j\omega}$,
\begin{align}
|H_c(e^{j\omega})|^2
&= \bigl((1{+}\lambda_c) - \lambda_c \cos\omega\bigr)^2
 + \bigl(\lambda_c \sin\omega\bigr)^2 \nonumber\\
&= (1{+}\lambda_c)^2 - 2\lambda_c(1{+}\lambda_c)\cos\omega
 + \lambda_c^2 \nonumber\\
&= 1 + 2\lambda_c(1{+}\lambda_c)(1 - \cos\omega).
\end{align}
Since $1 - \cos\omega \ge 0$ for $\omega \in [0,\pi]$ and is strictly
increasing on $[0,\pi]$, the squared magnitude is strictly increasing in
$\omega$ if and only if $\lambda_c(1{+}\lambda_c) > 0$, i.e.\ $\lambda_c \in
(-\infty, -1) \cup (0, \infty)$ (high-pass), and strictly decreasing if
and only if $\lambda_c(1{+}\lambda_c) < 0$, i.e.\ $\lambda_c \in (-1, 0)$
(low-pass). At $\lambda_c \in \{-1, 0\}$ the response is flat.\hfill$\square$

\subsection{Proof of Proposition~\ref{prop:bounded} (DS-MPA Boundedness)}
Let $\phi_i := \phi(Q_i) \in \RR^{C}_{\ge 0}$ and
$\psi_j := \phi(K_j) \in \RR^{C}_{\ge 0}$, and let
$w_{ij} := \phi_i^{\!\top}\psi_j \ge 0$. The attention numerator and
denominator for token $i$, channel $c$ are
\begin{equation}
n_{i,c} = \sum_{j=1}^N w_{ij}\, V_{j,c},
\qquad
d_i = \sum_{j=1}^N w_{ij}.
\end{equation}
We have $A_{i,c} = n_{i,c} / (d_i + \varepsilon)$. Define
$p_{ij} := w_{ij}/d_i \in [0,1]$ with $\sum_j p_{ij} = 1$ (a probability
distribution over tokens). Factoring out the normalizer yields the $\varepsilon$-exact form:
\begin{equation}
A_{i,c} \;=\; \frac{d_i}{d_i + \varepsilon} \sum_{j=1}^N p_{ij}\, V_{j,c}.
\end{equation}
Because $d_i \ge 0$ and $\varepsilon > 0$, the scaling factor satisfies $\eta_i := \frac{d_i}{d_i + \varepsilon} \in [0, 1)$. Hence $A_{i,c}$ is a convex combination of $\{V_{j,c}\}_{j=1}^N$ contracted by $\eta_i$, which lies strictly in the extended convex hull $\operatorname{Conv}(\{V_{j,c}\}_{j=1}^N \cup \{0\})$. In particular, for non-negative values $V_{j,c} \ge 0$,
\begin{equation}
0 \;\le\; A_{i,c} \;\le\; \max_j V_{j,c}.
\end{equation}
More generally, regardless of sign,
\begin{equation}
\min\Bigl(0,\, \min_j V_{j,c}\Bigr) \;\le\; A_{i,c} \;\le\; \max\Bigl(0,\, \max_j V_{j,c}\Bigr).
\end{equation}
No assumption on the sign or magnitude of $V$ is needed: the bound follows from $\phi \ge 0$ (making every $p_{ij} \ge 0$) together with the normalizer. If additionally $V_{j,c} \in [0,1]$, for instance if $V$ carries raw spikes rather than a linear projection of them, the convex hull is contained in $[0,1]$ and $A_{i,c} \in [0,1]$ as a special case. In all cases, $A_{i,c}$ is dynamically bounded and cannot saturate downstream activations.\hfill$\square$

\subsection{Proof of Proposition~\ref{prop:complexity} (DS-MPA Complexity)}
The dominant cost in Eq.~\eqref{eq:dsmpa-attn} is the product
$\phi(K)^\top V \in \RR^{C \times C}$, which requires $\mathcal{O}(N C^2)$
multiply-adds, followed by $\phi(Q) \cdot (\phi(K)^\top V)$ which is
$\mathcal{O}(N C^2)$ multiply-adds. The denominator
$\phi(K)^\top \mathbf{1}$ is $\mathcal{O}(N C)$ and
$\phi(Q) \cdot (\phi(K)^\top \mathbf{1})$ is $\mathcal{O}(N C)$, both
subdominant. The element-wise division and the modulation
Eq.~\eqref{eq:dsmpa-mod} are $\mathcal{O}(N C)$. Total:
$\mathcal{O}(N C^2)$ operations and $\mathcal{O}(NC + C^2)$ memory (the $C{\times}C$ key--value product plus the $N{\times}C$ output).

In comparison, standard softmax attention requires forming the
$N \times N$ score matrix at $\mathcal{O}(N^2 C)$ cost and $\mathcal{O}(N^2 + NC)$ storage, plus
$\mathcal{O}(N^2 C)$ for the value matmul. For our deepest stage with
$N = 256$ and $C = 128$, the ratio is
\begin{equation}
\frac{N^2 C}{N C^2} = \frac{N}{C} = \frac{256}{128} = 2,
\end{equation}
confirming the constant-factor speed-up cited in the body.\hfill$\square$

\subsection{Proof of Lemma~\ref{lem:tdm-grad} (TDM Gradient Decoupling)}
Let $\tilde x_t = (1+\lambda_c)\, x_t - \lambda_c\, x_{t-1}$ be the TDM
output, and let $f$ be a smooth downstream operator. By the chain rule,
\begin{equation}
\frac{\partial f(\tilde x_t)}{\partial x_{t-1}}
\;=\;
\frac{\partial f(\tilde x_t)}{\partial \tilde x_t}\,
\frac{\partial \tilde x_t}{\partial x_{t-1}}
+\frac{\partial f(\tilde x_t)}{\partial x_t^{(\text{spike})}}\,
\frac{\partial x_t^{(\text{spike})}}{\partial x_{t-1}},
\end{equation}
where the second term collects the gradient flowing through the spiking
state chain (involving the surrogate gradient
Eq.~\eqref{eq:atan}). The first term is
\begin{equation}
\frac{\partial f(\tilde x_t)}{\partial \tilde x_t} \cdot (-\lambda_c)
\;=\;
(-\lambda_c)\, f'(\tilde x_t),
\end{equation}
where $f'$ is the ordinary derivative of $f$ at $\tilde x_t$. This term
contains \emph{no surrogate gradient}, since TDM is a smooth linear
operator. The decomposition is therefore exactly the one claimed in
the lemma statement.\hfill$\square$

\subsection{Proof of Proposition~\ref{prop:ac-tdm} (Dual-Kernel Accumulate-Only Inference)}
Let $x_t\in\{0,1\}^{C_{\text{in}}\times H\times W}$ denote the binary spike tensor at timestep $t$. By the definition of TDM (\S\ref{sec:tdm}), for each input channel $c\in\{1,\dots,C_{\text{in}}\}$, the temporally modulated feature is
\begin{equation}
\tilde x_{t, c} \;=\; (1+\lambda_c)\, x_{t, c} - \lambda_c\, x_{t-1, c}.
\end{equation}
Let $W^\star\in\RR^{C_{\text{out}}\times C_{\text{in}}\times 3\times 3}$ and $b^\star\in\RR^{C_{\text{out}}}$ denote the fused spatial convolution kernel and bias (Proposition~\ref{prop:fusion}). The 2D spatial convolution at output channel $k\in\{1,\dots,C_{\text{out}}\}$ evaluates as:
\begin{equation}
\bigl[\mathrm{Conv}(W^\star, \tilde x_t)\bigr]_k \;=\; \sum_{c=1}^{C_{\text{in}}} W^\star_{k, c} * \tilde x_{t, c} + b^\star_k.
\end{equation}
Substituting the linear definition of $\tilde x_{t, c}$:
\begin{align}
\bigl[\mathrm{Conv}(W^\star, \tilde x_t)\bigr]_k
&= \sum_{c=1}^{C_{\text{in}}} W^\star_{k, c} * \bigl((1+\lambda_c)\, x_{t, c} - \lambda_c\, x_{t-1, c}\bigr) + b^\star_k \nonumber\\
&= \sum_{c=1}^{C_{\text{in}}} \bigl((1+\lambda_c)\, W^\star_{k, c}\bigr) * x_{t, c} - \sum_{c=1}^{C_{\text{in}}} \bigl(\lambda_c\, W^\star_{k, c}\bigr) * x_{t-1, c} + b^\star_k.
\end{align}
Defining the channel-scaled kernels $[W_a^\star]_{k, c} = (1+\lambda_c)\, W^\star_{k, c}$ and $[W_b^\star]_{k, c} = \lambda_c\, W^\star_{k, c}$, or in compact matrix notation:
\begin{equation}
W_a^\star \;=\; W^\star \cdot \diag(1+\lambda), \qquad W_b^\star \;=\; W^\star \cdot \diag(\lambda),
\end{equation}
we obtain the exact identity:
\begin{equation}
\mathrm{Conv}(W^\star, \mathrm{TDM}_\lambda(x))_t \;=\; W_a^\star * x_t - W_b^\star * x_{t-1} + b^\star.
\end{equation}
Because $x_t \in \{0, 1\}^{C_{\text{in}}\times H\times W}$ and $x_{t-1} \in \{0, 1\}^{C_{\text{in}}\times H\times W}$ are unquantized binary spikes, every spatial operation in $W_a^\star * x_t$ and $W_b^\star * x_{t-1}$ performs conditional addition (accumulation) over active event indices ($x_{t, c, u, v} = 1$) with zero multiplications. At $t=1$, boundary condition $x_0 = 0$ reduces the right-hand side to a single accumulate-only pass $W_a^\star * x_1 + b^\star$. The subtraction of the two pre-activations is an element-wise vector operation per timestep.\hfill$\square$

\section{Method Details}\label{app:method}
This appendix expands the modules of Section~\ref{sec:method} and states the two
propositions and the lemma whose proofs appear in Appendix~\ref{app:proofs}.

\subsection{Architectural Comparison with Mamba-Spike}
Table~\ref{tab:vs_mambaspike} summarizes the structural differences
between \name and Mamba-Spike along six design axes; the subsections
below expand each.
\begin{table}[h]
\centering
\caption{Architectural comparison of \name and Mamba-Spike.}
\label{tab:vs_mambaspike}
\renewcommand{\arraystretch}{1.15}
\begin{tabular}{lcc}
\toprule
Design axis & Mamba-Spike & \name \\
\midrule
Reparameterized convs       & \texttimes & \checkmark \\
Per-channel temporal filter & \texttimes & \checkmark (TDM) \\
Linear membrane attention   & \texttimes & \checkmark (DS-MPA) \\
Fixed-width SNN$\to$SSM bridge & \texttimes & \checkmark (MR-S3A) \\
Multi-resolution SNN bridge & \texttimes & \checkmark \\
Inference-fused $3{\times}3$ kernel & \texttimes & \checkmark \\
\bottomrule
\end{tabular}
\end{table}

\subsection{Mamba Discretization}
Given $\mathbf{u}\in\RR^{L\times d}$, Mamba evolves
$\dot{\mathbf h}(t)=A\mathbf h(t)+Bu(t)$, $y(t)=C\mathbf h(t)$, discretized with
step $\Delta$ by zero-order hold: $\bar A=\exp(\Delta A)$,
$\bar B=(\Delta A)^{-1}(\exp(\Delta A)-I)\Delta B$, giving
$\mathbf h_l=\bar A\mathbf h_{l-1}+\bar B u_l$, $y_l=C\mathbf h_l$, implemented as
a parallel selective scan at $\mathcal{O}(LNd)$. $B,C,\Delta$ are input-dependent.

\begin{figure}[h]
\centering
\includegraphics[width=\linewidth]{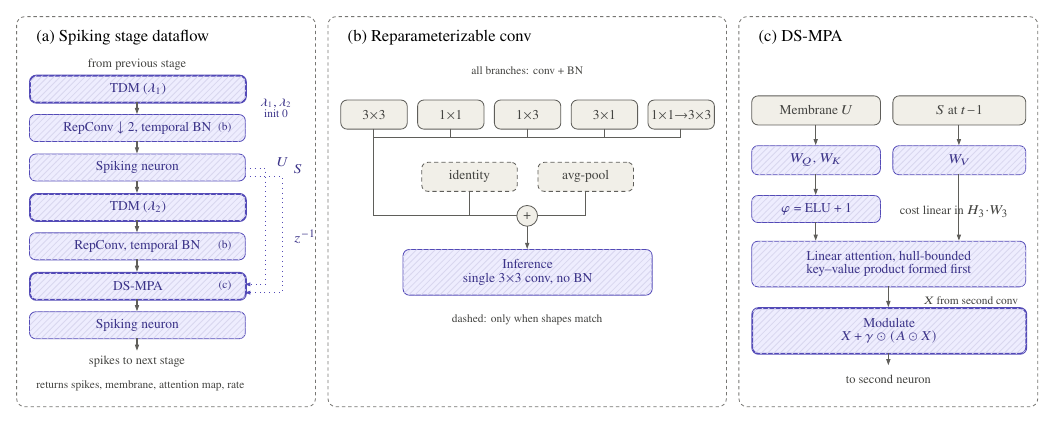}
\caption{RepSpikeStage internals. \textbf{(a)} Two
TDM$\to$RepConv$\to$BNTT$\to$neuron units. \textbf{(b)} The seven-branch RepConv fuses to a single
$3{\times}3$ kernel (Prop.~\ref{prop:fusion}). \textbf{(c)} DS-MPA, a dynamic convex-hull bounded
linear attention with $\phi{=}\mathrm{ELU}{+}1$, queries/keys from the
membrane $U$, values from the previous spike tensor.}
\label{fig:stage}
\end{figure}

\subsection{Reparameterization: Definitions and Parameter Algebra}
At training time each convolution is the seven-branch sum, each branch with its
own BatchNorm:
\begin{equation}\label{eq:rep-train}
\mathrm{RepConv}(x)=\sum_{b\in\mathcal{B}}\mathrm{BN}_b(W_b*x),\quad
\mathcal{B}=\{3{\times}3,1{\times}1,1{\times}3,3{\times}1,1{\times}1{\to}3{\times}3,\text{Id},\text{AvgPool}\}.
\end{equation}
With $s_b=\gamma_b/\sqrt{\sigma_b^2+\varepsilon}$ the fused kernel and bias are
\begin{align}
W^\star &= \textstyle\sum_{b\in\mathcal{B}}\mathrm{Pad}_{3\times3}(s_b W_b),\label{eq:fuse-W}\\
b^\star &= \textstyle\sum_{b\in\mathcal{B}}(\beta_b-\mu_b s_b).\label{eq:fuse-b}
\end{align}
The training-time count (six weight-bearing branches $+$ \emph{seven} BNs, since
the identity and average-pool branches carry a BN but no weights) is
$P^{\text{train}}=(9{+}1{+}3{+}3{+}9{+}1)c_{\text{in}}c_{\text{out}}+14c_{\text{out}}$,
fusing to $P^{\text{infer}}=9c_{\text{in}}c_{\text{out}}+c_{\text{out}}$, a ratio
$\to26/9\approx2.89$ as $c_{\text{in}}{=}c_{\text{out}}\to\infty$.

\paragraph{Execution order and stage-by-stage parameter accounting.}
The execution sequence in each spiking block is strictly:
\begin{equation}
x_t \xrightarrow{\quad\text{TDM}\quad} \tilde{x}_t \xrightarrow{\quad\text{RepConv}\quad} y_t \xrightarrow{\quad\text{BNTT}\quad} \hat{y}_t \xrightarrow{\quad\text{Neuron}\quad} s_t,
\end{equation}
where for the second block in each stage, DS-MPA modulates the post-BNTT features prior to the second neuron ($y_t \to \hat{y}_t \to \mathrm{DS\text{-}MPA}(U_1, S_{1,\text{prev}}, \hat{y}_t) \to s_t$). 
Within each $\mathrm{RepConvBlock}$, the seven branches possess internal 2D Batch Normalizations ($\mathrm{BN}_b$) computed across all batch and temporal items ($B \cdot T$). Because these internal BNs operate over the pooled temporal dimension, their running statistics and affine scales are time-invariant. Upon deployment, all seven branches and their internal BNs collapse into a single, static $3\times3$ convolutional weight $W^\star \in \RR^{c_{\text{out}} \times c_{\text{in}} \times 3 \times 3}$ and bias $b^\star \in \RR^{c_{\text{out}}}$ via Eqs.~\eqref{eq:fuse-W}--\eqref{eq:fuse-b}. Downstream of this fused convolution, BNTT applies an elementwise, per-timestep affine vector $(\gamma_t, \beta_t) \in \RR^{2 c_{\text{out}}}$ for $t \in [1, T]$ ($2 T c_{\text{out}}$ scalars in total). BNTT does \emph{not} replicate the spatial convolution kernels across timesteps.

We verify the exact stage-by-stage parameter breakdown by direct instantiation of our implementation:
\begin{itemize}[leftmargin=*,itemsep=1pt,topsep=2pt]
\item \textbf{Stage 1 ($c{=}32$):}
  $\mathrm{RepConv}_1$ ($2 \to 32$, stride 2: $10{,}624 \to 608$ fused), $\mathrm{RepConv}_2$ ($32 \to 32$, stride 1: $27{,}072 \to 9{,}248$ fused), $\mathrm{BNTT}_1 + \mathrm{BNTT}_2$ ($2 \times 2 \times T \times 32 = 1{,}280$ parameters at $T{=}10$), CSiLIF Neurons ($2 \times 5 \times 32 = 320$ parameters), $\text{DS-MPA}_1$ ($3 \times 32^2 + 32 = 3{,}104$), and TDM scalars $\lambda_1, \lambda_2$ ($2 + 32 = 34$). Stage 1 total: $42{,}434$ training $\to$ $14{,}594$ fused parameters ($T{=}10$).
\item \textbf{Stage 2 ($c{=}64$):}
  $\mathrm{RepConv}_1$ ($32 \to 64$, stride 2: $72{,}320 \to 18{,}496$ fused), $\mathrm{RepConv}_2$ ($64 \to 64$, stride 1: $107{,}392 \to 36{,}928$ fused), $\mathrm{BNTT}_1 + \mathrm{BNTT}_2$ ($2 \times 2 \times T \times 64 = 2{,}560$ parameters at $T{=}10$), SiLIF Neurons ($2 \times 3 \times 64 = 384$ parameters), $\text{DS-MPA}_2$ ($3 \times 64^2 + 64 = 12{,}352$), and TDM scalars ($32 + 64 = 96$). Stage 2 total: $195{,}104$ training $\to$ $70{,}816$ fused parameters ($T{=}10$).
\item \textbf{Stage 3 ($c{=}128$):}
  $\mathrm{RepConv}_1$ ($64 \to 128$, stride 2: $288{,}000 \to 73{,}856$ fused), $\mathrm{RepConv}_2$ ($128 \to 128$, stride 1: $427{,}776 \to 147{,}584$ fused), $\mathrm{BNTT}_1 + \mathrm{BNTT}_2$ ($2 \times 2 \times T \times 128 = 5{,}120$ parameters at $T{=}10$), SiLIF Neurons ($2 \times 3 \times 128 = 768$ parameters), $\text{DS-MPA}_3$ ($3 \times 128^2 + 128 = 49{,}280$), and TDM scalars ($64 + 128 = 192$). Stage 3 total: $771{,}136$ training $\to$ $276{,}800$ fused parameters ($T{=}10$).
\end{itemize}
Across the three stages, training parameters collapse from $1{,}008{,}674$ down to $362{,}210$ (a $2.78\times$ structural compression). All six fused convolutions sum to exactly $286{,}720$ parameters, which are strictly static and time-invariant. The BNTT layers across all three stages account for $8{,}960$ parameters at $T{=}10$ (or $14{,}336$ at $T{=}16$). Together with the entry temporal norm ($40$ params at $T{=}10$, $64$ at $T{=}16$), the entire fused spiking backbone comprises $362{,}250$ parameters (or $367{,}650$ at $T{=}16$). The MR-S3A bridge adds $133{,}411$: three per-scale projections ($18{,}496{+}36{,}928{+}73{,}792$), three softmax fusion weights, the bridge norm ($64$), and the attention-injection projection $W_A$ ($4{,}128$). The hierarchical Mamba stack adds $372{,}640$: three blocks at $361{,}760$, two transitions at $10{,}816$, and an input norm of $64$. With the classification head ($1{,}290$) and two frozen scalar coefficients retained from the token-ranking study of Table~\ref{tab:pruning-sweep}, this accounts for exactly $869{,}593$ parameters for a 10-class head at $T{=}10$ ($\approx 0.870$\,M). The head-independent core is $868{,}303$; at $T{=}16$ with an 11-class head the same model is $875{,}122$ (\S\ref{sec:complexity}).

The ATan surrogate used throughout is
\begin{equation}\label{eq:atan}
\frac{\partial s_t}{\partial u_t}\approx
\frac{\alpha}{2\bigl(1+(\tfrac{\pi}{2}\alpha(u_t-v_{\text{th}}))^2\bigr)}.
\end{equation}

\subsection{TDM: The Two Additional Views}
\paragraph{Spectral shape.}
TDM's transfer function is
\begin{equation}\label{eq:tdm-z}
H_c(z)=(1+\lambda_c)-\lambda_c z^{-1}.
\end{equation}
\begin{proposition}[TDM spectral shape]\label{prop:tdm-spec}
The squared magnitude response of $H_c$ is
$|H_c(e^{j\omega})|^2=1+2\lambda_c(1+\lambda_c)(1-\cos\omega)$, strictly
increasing in $\omega\in[0,\pi]$ (high-pass) for
$\lambda_c\in(-\infty,-1)\cup(0,\infty)$ and strictly decreasing (low-pass) for
$\lambda_c\in(-1,0)$.
\end{proposition}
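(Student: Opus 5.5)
The plan is to compute the frequency response of the two-tap FIR filter directly, then read the monotonicity off the closed form. Substituting $z=e^{j\omega}$ into Eq.~\eqref{eq:tdm-z} gives $H_c(e^{j\omega})=(1+\lambda_c)-\lambda_c\cos\omega+j\,\lambda_c\sin\omega$. The real part is $(1+\lambda_c)-\lambda_c\cos\omega$ and the imaginary part is $\lambda_c\sin\omega$, with $\lambda_c$ real. I will square both parts and add them. Using $\cos^2\omega+\sin^2\omega=1$, this collapses to $(1+\lambda_c)^2+\lambda_c^2-2\lambda_c(1+\lambda_c)\cos\omega$.

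The next step is to isolate the $\omega$-dependence. Writing $(1+\lambda_c)^2+\lambda_c^2=1+2\lambda_c(1+\lambda_c)$ (expand both sides), the expression becomes $1+2\lambda_c(1+\lambda_c)(1-\cos\omega)$, which is the claimed formula. An equivalent sanity check is to note that $|a-be^{-j\omega}|^2=a^2+b^2-2ab\cos\omega$ with $a=1+\lambda_c$ and $b=\lambda_c$, and that $a-b=1$.

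For the monotonicity claims, the only $\omega$-dependent factor is $g(\omega)=1-\cos\omega$. Its derivative is $\sin\omega>0$ on $(0,\pi)$, so $g$ is strictly increasing on $[0,\pi]$. The squared magnitude is therefore an affine function of $g$ with slope $k=2\lambda_c(1+\lambda_c)$. When $k>0$ it is strictly increasing in $\omega$, and when $k<0$ it is strictly decreasing.

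It remains to solve the sign of the quadratic $\lambda_c(1+\lambda_c)$. Its roots are $\lambda_c=0$ and $\lambda_c=-1$, so it is positive on $(-\infty,-1)\cup(0,\infty)$ (high-pass) and negative on $(-1,0)$ (low-pass). I would also note that the response is flat at the two roots, consistent with TDM reducing to the identity at $\lambda_c=0$.

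No step is a real obstacle; the only care needed is the bookkeeping in the algebraic simplification of the constant term, and restricting to $[0,\pi]$ so that monotonicity of $1-\cos\omega$ holds.
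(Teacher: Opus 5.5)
Your proposal is correct and follows the paper's proof: evaluate on the unit circle, add the squared real and imaginary parts to get $1+2\lambda_c(1+\lambda_c)(1-\cos\omega)$, then read off monotonicity from the strict increase of $1-\cos\omega$ on $[0,\pi]$ and the sign of $\lambda_c(1+\lambda_c)$. Your remark that the response is flat at $\lambda_c\in\{-1,0\}$ also appears in the paper's version.
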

\paragraph{Gradient decoupling.}
\begin{lemma}[TDM gradient decoupling]\label{lem:tdm-grad}
With TDM before a smooth operator $f$ and the LIF reset treated as a
stop-gradient,
$\partial f(\tilde x_t)/\partial x_{t-1}=(-\lambda_c)f'(\tilde x_t)+
(\partial f/\partial x_t^{(\text{spike})})(\partial x_t^{(\text{spike})}/\partial x_{t-1})$;
the first term carries no surrogate gradient.
\end{lemma}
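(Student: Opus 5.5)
The lemma concerns the TDM gradient decoupling. The plan is a direct chain-rule computation on $\tilde x_t=(1+\lambda_c)x_t-\lambda_c x_{t-1}$, with the stated stop-gradient convention fixing what "the dependence of $x_t$ on $x_{t-1}$" means. I will first make the computational graph explicit. In the unrolled network, $x_{t-1}$ reaches $\tilde x_t$ along two routes.

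\emph{Direct route.} $x_{t-1}$ enters $\tilde x_t$ linearly through the TDM tap with coefficient $-\lambda_c$.

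\emph{Indirect route.} $x_t$ is itself the spike output of an upstream neuron whose membrane state depends on earlier inputs. This gives a path $x_{t-1}\to u_t\to x_t^{(\text{spike})}$. Because the LIF reset is treated as a stop-gradient, this path passes only through the leak and integration terms and the Heaviside, which is replaced by the surrogate of Eq.~\eqref{eq:atan}.

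Writing $\tilde x_t$ as a function $g(x_t^{(\text{spike})},x_{t-1})$ and applying the multivariate chain rule to $f\circ g$ gives
\[
\frac{\partial f}{\partial x_{t-1}} = f'(\tilde x_t)\,\frac{\partial g}{\partial x_{t-1}} + f'(\tilde x_t)\,\frac{\partial g}{\partial x_t}\,\frac{\partial x_t^{(\text{spike})}}{\partial x_{t-1}}.
\]
Here $\partial g/\partial x_{t-1}=-\lambda_c$ and $\partial g/\partial x_t = 1+\lambda_c$. The second product is exactly what the statement abbreviates as $(\partial f/\partial x_t^{(\text{spike})})(\partial x_t^{(\text{spike})}/\partial x_{t-1})$, with the factor $1+\lambda_c$ absorbed into $\partial f/\partial x_t^{(\text{spike})}$. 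I will state that identification explicitly so the displayed decomposition matches term by term.

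Next I will show the first term carries no surrogate. It is the product of the constant $-\lambda_c$, which is the derivative of a linear map, and $f'(\tilde x_t)$, which exists classically because $f$ is smooth. Neither factor involves $\Theta'$ or its ATan replacement. So $(-\lambda_c)f'(\tilde x_t)$ is surrogate-free. All surrogate dependence sits in $\partial x_t^{(\text{spike})}/\partial x_{t-1}$, which is where the spike nonlinearity is differentiated.

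The main obstacle is making "smooth operator $f$" and the stop-gradient precise enough that the two-path split is well defined and exhaustive. In particular, there must be no third path, for example through the reset, which the stop-gradient removes, or through $x_{t-1}$'s own influence on other inputs to $f$. I would handle this by restricting $f$ to depend on $x_{t-1}$ only through $\tilde x_t$, that is, $f$ is applied to the TDM output at time $t$. I would also note that if $f$ were itself spiking, its own surrogate would appear in $f'$. So the claim should be read with $f$ a genuinely smooth map such as the convolution and BNTT layers preceding the neuron.
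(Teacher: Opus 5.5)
Your proposal is correct and follows the paper's proof. Both split $\partial f(\tilde x_t)/\partial x_{t-1}$ by the chain rule into the direct TDM tap, which gives $(-\lambda_c)f'(\tilde x_t)$ and carries no surrogate because TDM is linear and $f$ is smooth, plus a spike-chain term. Your identification $\partial f/\partial x_t^{(\text{spike})}=(1+\lambda_c)f'(\tilde x_t)$, and your requirement that $f$ be non-spiking and see $x_{t-1}$ only through $\tilde x_t$, make explicit what the paper leaves implicit.

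One small slip, which does not affect the result: for the upstream neuron $u_t=\beta(u_{t-1}-v_{\text{th}}s_{t-1})+I_t$ with $s_t=x_t^{(\text{spike})}$, the previous spike $s_{t-1}=x_{t-1}$ enters $u_t$ only through the reset term. Once the reset is detached, your ``leak and integration'' path carries gradient from $u_{t-1}$ and $I_t$, not from $x_{t-1}$, so $\partial x_t^{(\text{spike})}/\partial x_{t-1}$ is actually zero. The decomposition holds for whatever value that Jacobian takes, including zero.
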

The TDM path therefore carries a \emph{surrogate-independent} temporal
gradient. It survives regardless of whether the neuron is in its firing band,
since the first term contains no surrogate factor. We say
``surrogate-independent'' rather than ``unbiased'' deliberately: the lemma
establishes a differentiable path that bypasses the surrogate, not a
statistical property of a gradient estimator.

\subsection{DS-MPA Complexity}
\begin{proposition}[DS-MPA complexity]\label{prop:complexity}
Eq.~\eqref{eq:dsmpa-attn} evaluates in $\mathcal{O}(NC^2)$ time and
$\mathcal{O}(NC+C^2)$ memory, versus $\mathcal{O}(N^2C)$ time and
$\mathcal{O}(N^2+NC)$ memory for softmax attention (time ratio $N/C$;
$2\times$ at the deepest stage, $N{=}256$, $C{=}128$).
\end{proposition}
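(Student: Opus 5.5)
The plan is a direct operation count on the two factored products in Eq.~\eqref{eq:dsmpa-attn}, followed by the same accounting for softmax attention. With $\phi(Q),\phi(K)\in\RR^{N\times C}$ and $V\in\RR^{N\times C}$, I would first note that applying $\phi$ elementwise costs $\mathcal{O}(NC)$. The key is associativity: we form $M:=\phi(K)^{\top}V\in\RR^{C\times C}$ \emph{before} multiplying by $\phi(Q)$, so no $N\times N$ object ever appears. Computing $M$ is a contraction over the $N$ sequence positions for each of the $C^2$ entries, hence $NC^2$ multiply-adds. The product $\phi(Q)M$ costs another $NC^2$. For the normalizer, the vector $z:=\phi(K)^{\top}\mathbf 1\in\RR^{C}$ costs $NC$, and $\phi(Q)z$ costs $NC$. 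The entrywise division, the $\varepsilon$ shift and the modulation Eq.~\eqref{eq:dsmpa-mod} are all $\mathcal{O}(NC)$. Summing gives $\mathcal{O}(NC^2)$ time. For memory, the only live objects are the $N\times C$ inputs and outputs, the $C\times C$ matrix $M$ and the length-$C$ vector $z$, so memory is $\mathcal{O}(NC+C^2)$.

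For the comparison, I would write softmax attention in its standard form $\mathrm{softmax}(QK^{\top})V$. Forming the score matrix $QK^{\top}\in\RR^{N\times N}$ costs $N^2C$. The row softmax costs $\mathcal{O}(N^2)$, and multiplying by $V$ costs another $N^2C$. This gives $\mathcal{O}(N^2C)$ time, with $\mathcal{O}(N^2+NC)$ storage for the scores plus the inputs and outputs. The time ratio is then $N^2C/(NC^2)=N/C$. Plugging in the deepest stage, $N=256$ and $C=128$, gives $2$.

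The step most in need of care is not algebraic but a matter of convention. I need to fix that the stated complexity refers to the associativity-reordered evaluation; evaluating $(\phi(Q)\phi(K)^{\top})V$ naively would itself cost $\mathcal{O}(N^2C)$. I also need to state that the softmax baseline is materialized in the standard way, so that the ratio compares leading terms of like-for-like counts. I would also remark that the projections producing $Q,K,V$ cost $\mathcal{O}(NC^2)$ as well, so including them leaves the DS-MPA bound unchanged. Including them in the softmax count only adds a subdominant term whenever $N\ge C$, which holds at the deepest stage.
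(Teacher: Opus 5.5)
Your proposal is correct and follows essentially the same route as the paper. It counts the associativity-ordered products $\phi(K)^{\top}V$ and $\phi(Q)\bigl(\phi(K)^{\top}V\bigr)$ at $\mathcal{O}(NC^2)$ and the normalizer and element-wise steps at $\mathcal{O}(NC)$, then compares against the $\mathcal{O}(N^2C)$ score and value products of softmax attention to get the ratio $N/C=256/128=2$; your remarks on evaluation order and the $\mathcal{O}(NC^2)$ projection cost are sound clarifications the paper leaves implicit.
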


\paragraph{Mathematical failure mode of sigmoid-gated attention vs.\ DS-MPA.}
A parameter-matched sigmoid attention gate is the natural alternative to
DS-MPA's normalized feature map, and it carries a specific failure mode:
\begin{equation}
A_{\text{sig}} = \sigma(Q K^\top) V,
\end{equation}
where $\sigma(z) = (1 + e^{-z})^{-1}$ and $Q, K, V$ share the identical projections as DS-MPA. In backpropagation, the gradient with respect to pre-activation $z = Q K^\top$ is:
\begin{equation}
\frac{\partial A_{\text{sig}}}{\partial z} = \sigma'(z) \cdot V = \sigma(z)\bigl(1 - \sigma(z)\bigr) \cdot V.
\end{equation}
When input event rates burst, membrane potentials $U$ fluctuate over large dynamic ranges, pushing $|z| \gg 0$ into the flat saturation tails where $\sigma'(z) \to 0$. In spiking neural networks, backward gradients must already traverse the surrogate gradient non-linearity $\frac{\partial s}{\partial u} = \frac{\alpha}{2(1 + (\frac{\pi}{2}\alpha u)^2)}$ of the spiking neurons. Cascading this surrogate factor with $\sigma'(z) \to 0$ causes compound \emph{gradient starvation}, extinguishing gradient flow into the early convolutional layers. 

In contrast, DS-MPA's kernel feature map $\phi(z) = \mathrm{ELU}(z) + 1 \ge 0$ possesses an asymmetrical, strictly non-vanishing derivative:
\begin{equation}
\phi'(z) = \begin{cases} 1, & z > 0, \\ e^z > 0, & z \le 0. \end{cases}
\end{equation}
Because $\phi'(z) \ge \min(1, e^z) > 0$ everywhere and never saturates for positive activations ($z > 0$), gradients flow smoothly into $U_1$ regardless of burst amplitude. Combined with the linear normalizer $\sum_j \phi(K_j) + \varepsilon$ that contracts the output into the dynamic convex hull of $V$ (Proposition~\ref{prop:bounded}), DS-MPA prevents both output divergence and gradient starvation.
The argument is structural. Ablation A1 replaces DS-MPA with the identity, so
its $-3.16$\,pp on N-Caltech101 establishes that the module matters rather
than isolating boundedness as the reason; separating the two needs the
sigmoid-gated control above, which we do not train here.

\subsection{Dual-Branch Training with the SGC Loss}
Every Mixup batch runs twice through the shared weights: a discrete pass (hard
spikes, ATan surrogate) supervised by $\mathcal{L}_{\text{TET}}$, and a
continuous pass (soft spikes $\sigmoid(\alpha(u{-}\theta))$, exact gradient). The
symmetric $\mathcal{L}_{\text{SGC}}=\mathrm{MSE}(z_d,z_c)$ couples them (a
mutual-consistency penalty, not distillation). At inference the continuous branch
is dropped, RepConv is fused, BN is recalibrated on clean data, and one
discrete pass is averaged over $T$ (Fig.~\ref{fig:dual-branch}).
The consistency gap falls from a peak of $\approx0.11$ to $\approx1.8\times10^{-3}$
over training.

\begin{figure}[h]
\centering
\includegraphics[width=\linewidth]{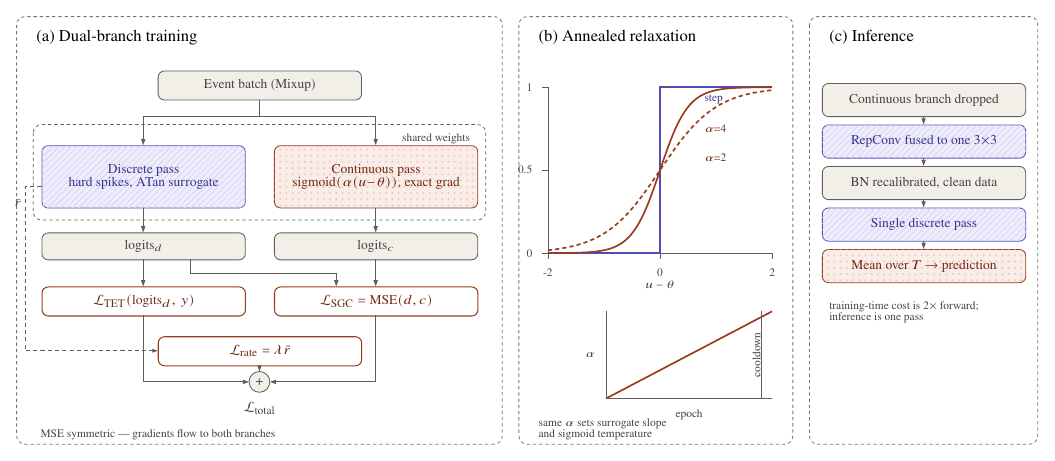}
\caption{Dual-branch training with the SGC loss. Coral marks training-time-only
components removed at deployment. During the continuous pass the BatchNorm
momentum is frozen so running statistics are not updated twice per step.}
\label{fig:dual-branch}
\end{figure}

\begin{table}[t]
\centering
\caption{Train-time vs.\ inference-time parameter count of \name
(hierarchical configuration, stage channels $[32,64,128]$, hierarchical
Mamba level dimensions $(32,64,128)$). Structural reparameterization fuses
the multi-branch spiking backbone into single $3{\times}3$ kernels, reducing backbone parameters by $\sim$$2.8\times$ at inference and yielding a
deployed model of \textbf{0.875\,M} parameters (Proposition~\ref{prop:fusion}).}
\label{tab:param-fusion}
\renewcommand{\arraystretch}{1.15}
\fittable{%
\begin{tabular}{lcc}
\toprule
Component & $P^{\text{train}}$ & $P^{\text{infer}}$ \\
\midrule
RepSpikeStage backbone (6 RepConv units) & 1.014\,M & 0.368\,M \\
Hierarchical Mamba + MR-S3A bridge + classifier & 0.508\,M & 0.508\,M \\
\midrule
\textbf{Full model}                      & \textbf{1.522\,M} & \textbf{0.875\,M} \\
\bottomrule
\end{tabular}}
\end{table}

\begin{figure}[t]
\centering
\small
\begin{verbatim}
# DS-MPA. U, S_prev, X : (B,T,C,H,W).  N = H*W tokens, C channels.
#   U      = membrane potentials at t      -> queries and keys
#   S_prev = BINARY spikes at t-1          -> values
#   X      = features to modulate
Q, K = W_Q(U), W_K(U)              # 1x1 convs; reshape to (B*T, N, C)
V    = W_V(S_prev)                 # 1x1 conv on spikes; unconstrained
phiQ, phiK = elu(Q) + 1, elu(K) + 1        # feature map, phi >= 0

KV  = phiK.transpose(1, 2) @ V     # (B*T, C, C)  contraction over TOKENS
num = phiQ @ KV                    # (B*T,N,C)  O(N C^2), no NxN matrix
den = (phiQ * phiK.sum(1, keepdim=True)).sum(-1, keepdim=True)
A   = num / (den + eps)            # convex comb. of V's rows -> Prop. 2
A   = A.permute(0, 2, 1).view(B, T, C, H, W)  # (B*T,N,C) -> (B,T,C,H,W)

return X + gamma * (A * X), A      # gamma: per-channel, init 1
\end{verbatim}
\caption{\textbf{DS-MPA in full.} Channel mixing, not token mixing: the
key--value contraction is over the $N$ token positions and produces a
$C\times C$ matrix, so cost is $\mathcal{O}(NC^2)$ and no $N\times N$ attention
matrix is ever materialized (contrast softmax attention's
$\mathcal{O}(N^2C)$). Queries and keys read the \emph{current} membrane
potential while values read the \emph{previous} binary spikes, and the
``dual-stream'' of the name. Two consequences are visible directly in the
code: $\phi\ge0$ plus the normalizer \texttt{den} make \texttt{A} a convex
combination of the rows of \texttt{V} (Prop.~\ref{prop:bounded}), and there is
no additive learnable gate bias anywhere, so no saturating scalar can zero the
gradient. \texttt{W\_V} is deliberately unconstrained; forcing
$V\in[0,1]$ would require a sigmoid and reintroduce exactly that failure mode.}
\label{fig:dsmpa-pseudo}
\end{figure}

\subsection{MR-S3A Bridge Detail}
Pseudo-code and the exact per-scale tensor shapes for the bridge introduced in
\S\ref{sec:mrs3a} are below. A learnable projection
$W_A\in\RR^{C_3\times d_{\text{m}}}$ additionally injects the stage-3 DS-MPA
attention map $\bar a_3$ into the token sequence via
$\mathbf{tokens}\mathrel{+}= W_A\bar a_3$, a resolution-independent
modulation (same $C_3{\to}d_{\text{m}}$ regardless of $H{\times}W$)
contributing $0.5\%$ of the deployed parameters.

\begin{figure}[t]
\centering
\small
\begin{verbatim}
# MR-S3A bridge. S_i: (B, T, C_i, H_i, W_i) spikes, i in {1,2,3}.
#   d_m: token width (fixed, independent of H x W)
#   N  = H_3 * W_3: token count at deepest resolution
tok = []                                        # collect per-scale
for i in 1, 2, 3:
    r     = cumsum(S_i, dim=T) / arange(1, T+1)  # cum. rate
    d     = r - shift(r, 1, dim=T)               # first diff.
    f_i   = concat([r, d], dim=C)                # 2*C_i chans
    f_i   = Pool_{H_3,W_3}(f_i)                  # H_3 x W_3
    tok_i = Proj_i(f_i)                          # 3x3 conv + BN -> d_m
    tok.append(tok_i.reshape(B, T, N, d_m))      # flatten spatial

w = softmax(w_hat)                               # in R^3
# Partition of unity across scales: (B, T, N, d_m)
tokens = w[0]*tok[0] + w[1]*tok[1] + w[2]*tok[2]
tokens = tokens + W_A(a3_pooled)                 # + attention mod.

return tokens  # all N tokens enter Mamba scan
\end{verbatim}
\caption{\textbf{MR-S3A in full.} Each stage's binary spike tensor is summarized
by a cumulative rate $r$ and its first difference $\delta$ (rate \emph{and}
instantaneous coding, $2C_i$ channels), pooled to the deepest stage's spatial
size $H_3{\times}W_3$, and projected to the shared token width $d_{\text{m}}$
by a per-scale $3{\times}3$ convolution with BatchNorm. The three scales are combined by a
learnable softmax weight $w\in\Delta^2$ (a partition of unity), not
concatenated or gated, so the output is always $N{=}H_3W_3$ tokens of width
$d_{\text{m}}$ regardless of the input resolution $H{\times}W$; only $H_1,
H_2, H_3$ (and $W_1,W_2,W_3$) change with resolution, and $\mathrm{Pool}$
absorbs that into a fixed output size before $d_{\text{m}}$ is ever touched.
This is the mechanism behind the resolution-independence of
Section~\ref{sec:mrs3a}: no weight in $\mathrm{Proj}_i$ or in the softmax gate
depends on $H$ or $W$. A learnable projection $W_A$ injects the stage-3 DS-MPA
attention map into the token sequence (matching the $+W_A\bar a_3$ of the code
and Fig.~\ref{fig:arch}). All $N$ tokens are retained, with no top-$k$ selection
or pruning is applied anywhere in this path. Exact per-level shapes
($d_{\text{m}}$, token count, pooling factors) are in
Table~\ref{tab:mamba-spec}.}
\label{fig:mrs3a-pseudo}
\end{figure}

\subsection{Formal Derivation of the CSiLIF Neuron}\label{app:csilif}
The Complex-valued Sigmoid LIF (CSiLIF) neuron used in Stage~1
(\S\ref{sec:repspike}) is a discrete-time resonate-and-fire
model~\citep{izhikevich2001resonate} with learnable complex poles, derived
below.

\paragraph{Continuous dynamics.}
Each channel $c$ maintains a complex-valued membrane state
$m(t)\in\CC$ evolving under the linear ODE
\begin{equation}\label{eq:csilif-cont}
\dot{m}(t) = \rho_c\, m(t) + b_c\, x(t),
\end{equation}
where $\rho_c = -e^{\ell_c} + j\,\omega_c \in \CC^-$ is a pole in the
open left half-plane (the parameterization $-e^{\ell_c}$ enforces
$\Real(\rho_c)<0$ for every $\ell_c\in\RR$) and $b_c\in\RR$ is a
per-channel input gain.

\paragraph{Zero-order hold discretization.}
With a learnable step $\Delta t_c = e^{\tau_c} > 0$
(parameterized as $\tau_c = \texttt{log\_dt}$ in the code), the exact
zero-order hold gives the discrete pole
\begin{equation}\label{eq:csilif-disc}
\alpha_c = \exp(\rho_c\,\Delta t_c)
        = \exp\!\bigl((-e^{\ell_c} + j\,\omega_c)\,e^{\tau_c}\bigr).
\end{equation}

\paragraph{Unconditional Schur stability.}
\begin{proposition}[CSiLIF stability]\label{prop:csilif-stable}
For every $(\ell_c,\omega_c,\tau_c)\in\RR^3$, the discrete pole satisfies
$|\alpha_c|<1$.
\end{proposition}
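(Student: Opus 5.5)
The plan is to compute the modulus of the complex exponential in Eq.~\eqref{eq:csilif-disc} directly and show it is strictly less than one. For any $z\in\CC$ we have $|\exp(z)|=\exp(\Real z)$. This holds because $\exp(a+jb)=e^{a}(\cos b+j\sin b)$ and $|\cos b+j\sin b|=1$. We apply it to $z=\rho_c\,\Delta t_c=(-e^{\ell_c}+j\,\omega_c)\,e^{\tau_c}$. Since $e^{\tau_c}$ is a real positive scalar, the real part is $\Real(z)=-e^{\ell_c}e^{\tau_c}=-e^{\ell_c+\tau_c}$. The imaginary part $\omega_c e^{\tau_c}$ contributes only a unit-modulus rotation. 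Hence
\begin{equation*}
|\alpha_c| \;=\; \exp\!\bigl(-e^{\ell_c+\tau_c}\bigr).
\end{equation*}

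The second step is the strict inequality. For every real $\ell_c,\tau_c$ the quantity $e^{\ell_c+\tau_c}$ is strictly positive, so the exponent $-e^{\ell_c+\tau_c}$ is strictly negative. Because the real exponential is strictly increasing with $e^{0}=1$, this gives $|\alpha_c|<1$. The argument places no constraint on $\omega_c$, and $\ell_c,\tau_c$ range over all of $\RR$. This is exactly the point of the parameterizations $\Real(\rho_c)=-e^{\ell_c}$ and $\Delta t_c=e^{\tau_c}$ stated above: they make the claim hold for all $(\ell_c,\omega_c,\tau_c)\in\RR^3$ without any clipping.

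There is no real obstacle here; the result is a one-line identity. The only point needing care is that the bound is strict but not uniform: $|\alpha_c|\to1$ as $\ell_c+\tau_c\to-\infty$. I would state this explicitly so the proposition is not read as a uniform spectral-gap claim.

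I would also remark on what Schur stability buys for the recurrence. The scalar recurrence $m_t=\alpha_c m_{t-1}+(\cdot)x_t$ then has a geometrically decaying impulse response. With $U=2\Real(m)$, bounded inputs therefore give bounded membrane readouts, which is the damped-oscillation (band-pass) behaviour claimed in Section~\ref{sec:repspike}.
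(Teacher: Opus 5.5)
Your proof is correct and is essentially the paper's argument: the paper also splits $\alpha_c$ into a real-exponential factor and the unit-modulus rotation $\exp(j\omega_c\Delta t_c)$, which gives $|\alpha_c|=\exp(-e^{\ell_c}e^{\tau_c})<1$. Your added caveat that the bound is strict but not uniform ($|\alpha_c|\to1$ as $\ell_c+\tau_c\to-\infty$) is correct and worth stating; the paper does not mention it.
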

\begin{proof}
$|\alpha_c|
= |\exp(-e^{\ell_c}\Delta t_c)|\cdot|\exp(j\,\omega_c\,\Delta t_c)|
= \exp(-e^{\ell_c}\Delta t_c)\cdot 1
= \exp(-e^{\ell_c}\,e^{\tau_c})$.
Since $e^{\ell_c}>0$ and $e^{\tau_c}>0$, the exponent is strictly
negative, so $|\alpha_c|<1$. No clipping or projection is needed.
\end{proof}

\paragraph{Discrete recurrence with soft reset.}
The membrane evolves as
\begin{equation}\label{eq:csilif-recur}
m_t = \alpha_c\,(m_{t-1} - \kappa\, s_{t-1}) + b_c\, x_t,
\end{equation}
where $\kappa=0.5$ is the reset factor (a hyperparameter; $\kappa=1$
corresponds to hard reset). The real-valued readout and spike are
\begin{equation}\label{eq:csilif-spike}
U_t = 2\,\Real(m_t),\qquad
s_t = \Theta(U_t - v_{\text{th},c}),
\end{equation}
with $v_{\text{th},c}$ a learnable per-channel threshold and $\Theta$
replaced by the ATan surrogate (Eq.~\eqref{eq:atan}) during
backpropagation.

\paragraph{Interpretation.}
The imaginary part of $\rho_c$ gives each channel a learnable
oscillation frequency $\omega_c$, turning the low-pass response of a
real LIF into a damped band-pass. The readout $U_t=2\Real(m_t)$
discards the imaginary component, which carries phase information only;
the factor $2$ compensates for the energy split between real and
imaginary parts. Compared to SiLIF (Stages~2--3), which has
$\rho_c\in\RR^-$ and no oscillation, CSiLIF adds one learnable scalar
($\omega_c$) per channel at negligible parameter cost.

\section{Experimental Details and Extended Ablations}\label{app:exp}

This appendix supplies the material needed to reproduce every number in the
main text. That includes the exact per-dataset recipe (\S\ref{app:hyper}),
the backbone specification the parameter counts are derived from
(\S\ref{app:mamba-spec}), the four sweeps summarized in one paragraph of
\S\ref{sec:ablation} (\S\ref{app:sweeps}--\S\ref{app:tsweep}), the
optimization behaviour behind the reported schedule (\S\ref{app:dynamics}),
and qualitative evidence for the bounded-attention claim together with the
per-class breakdown behind the headline DVS-Gesture accuracy
(\S\ref{app:attn}). Unless stated otherwise every number is a single-seed
measurement under the protocol of \S\ref{sec:exp}; the five-seed results are
confined to the five headline accuracies.

\subsection{Instantiated Resolution Sweep}\label{app:ressweep}
The counts behind Fig.~\ref{fig:interface}, tabulated. Both models are
instantiated at each resolution and counted directly (no training, no
estimation); \name is counted after reparameterization fusion, i.e.\ the
deployed graph, and the configuration is held fixed down each column so this
isolates resolution alone.

\begin{table}[t]
\centering
\caption{\textbf{The SNN$\to$SSM interface's parameter cost scales with input
resolution; \name's does not.} Both models instantiated and counted directly at
each resolution (\name after fusion, i.e.\ the deployed graph); configuration
held fixed down each column, so this isolates resolution alone.}
\label{tab:resolution-scaling}
\renewcommand{\arraystretch}{1.0}
\footnotesize
\setlength{\tabcolsep}{6pt}
\begin{tabular}{lcccc}
\toprule
Input & \name & \multicolumn{3}{c}{Mamba-Spike} \\
\cmidrule(lr){2-2}\cmidrule(lr){3-5}
resolution & deployed (M) & total (M) & \texttt{input\_proj} (M) & share \\
\midrule
$34\times34$    & \textbf{0.870} & 4.79   & 2.10   & 43.8\% \\
$48\times48$    & \textbf{0.870} & 7.41   & 4.72   & 63.6\% \\
$128\times128$  & \textbf{0.870} & 36.25  & 33.55  & 92.6\% \\
$224\times224$  & \textbf{0.870} & 105.46 & 102.76 & 97.4\% \\
\midrule
growth         & $\mathbf{1.00\times}$ & $22.0\times$ & $49.0\times$ & --- \\
\bottomrule
\end{tabular}
\end{table}

\subsection{The Interface Swap in Isolation}\label{app:swap}
Table~\ref{tab:resolution-scaling} shows what the flatten-then-project bridge
costs in parameters. Whether it earns them is a separate question, and it is
answerable by changing that one layer in the released Mamba-Spike model. In
place of flattening the front-end feature map and projecting it densely
(Eq.~\ref{eq:flatten-project}), we average-pool it to a fixed $8{\times}8$
grid and project from there. The spiking front-end, neuron models, Mamba
blocks and classifier are the released ones, and both arms share a trainer,
optimizer, schedule and epoch budget; \texttt{input\_proj} is the only module
that differs.

\begin{table}[h]
\centering
\caption{Interface swap on CIFAR10-DVS at $128^2$, released Mamba-Spike code
under its own protocol. Parameters are counted from the trained checkpoints.}
\label{tab:swap}
\renewcommand{\arraystretch}{1.1}
\begin{tabular}{lccc}
\toprule
Bridge & Total params & \texttt{input\_proj} & Top-1 \\
\midrule
Flatten-then-project (released) & $36.25$\,M & $33.55$\,M ($92.6\%$) & $48.90$ \\
Pool to $8{\times}8$, then project & $\mathbf{4.80}$\,\textbf{M} & $\mathbf{2.10}$\,\textbf{M} ($43.7\%$) & $\mathbf{53.30}$ \\
\bottomrule
\end{tabular}
\end{table}

Pooling removes $16\times$ of the bridge and $7.6\times$ of the whole model,
and gains $4.40$\,pp on top (Table~\ref{tab:swap}). The measured
$92.6\%$ also confirms from a trained checkpoint the share quoted in
\S\ref{sec:intro} from instantiation alone. This protocol carries no
augmentation and drives both arms to $100\%$ train accuracy, so both sit well
below the augmented CIFAR10-DVS figures in Table~\ref{tab:sota-all}; the
comparison is internal to the pair, and it bounds what the resolution-coupled
projection buys rather than measuring either bridge at its best.

\subsection{Protocol Markers in the SOTA Table}\label{app:proto}

Table~\ref{tab:sota-all} carries four markers, expanded here so the caption
can stay short. $^\ast$ marks a figure quoted from the method's own paper. We
did not retrain those, so they are cross-protocol comparisons. $^\ddagger$
marks the released Mamba-Spike code run under \emph{its own} protocol, and
$^\dagger$ the same code under \emph{ours}, which is the like-for-like
comparison. $^\S$ marks published Mamba-Spike figures that are for a larger
configuration whose size the paper does not state, so no parameter count is
attributable to them; the $36.25$\,M we quote is measured from the released
code instead. Its published $99.4\%$ is on a MNIST variant it calls
Sequential MNIST~\citep{le2015irnn}, not N-MNIST, so we do not map the two
onto each other. On CIFAR10-DVS the published Mamba-Spike figure ($92.50$)
far exceeds our reproduction, and its released baselines there likewise
exceed those methods' values elsewhere. That points to a different protocol,
so we rest the comparison on our controlled reproduction instead.
DailyDVS-200 parameter counts are from~\citep{evmamba}, the only source
reporting them. Five-seed means for \name are in \S\ref{sec:exp}.

\subsection{Why Parameter Count Is the Binding Constraint}\label{app:footprint}

The efficiency claim is a deployed parameter count, not operations or energy (\S\ref{sec:discussion}). That axis is the one
that binds on the platforms event cameras are actually mounted on.

Event-based sensors are used predominantly where power and payload are
budgeted: micro-aerial vehicles, mobile robots, automotive perception and
wearable devices~\citep{gallego2020survey}. On such platforms weights are held
in on-chip SRAM or modest external flash, and on neuromorphic accelerators the
constraint is tighter still, since synaptic memory is distributed across cores
and is not expandable~\citep{merolla2014truenorth,davies2018loihi}. Unlike an
arithmetic budget, which trades against latency and can be met by running
slower, a weight budget binds \emph{absolutely}: a model whose parameters do not
fit does not run at reduced speed, it does not run at all.

Table~\ref{tab:footprint} makes the resulting gap concrete at half precision.

\begin{table}[h]
\centering
\caption{Deployed weight storage at FP16 (2\,bytes per parameter). Parameter
counts are the deployed/fused values used throughout; baseline counts are as
cited in the corresponding SOTA table. Storage is weights only; activations
and the runtime are additional.}
\label{tab:footprint}
\renewcommand{\arraystretch}{1.1}
\begin{tabular}{lcc}
\toprule
Model & Params & FP16 weights \\
\midrule
\textbf{\name} (small benchmarks) & $0.870$\,M & $\mathbf{1.7}$\,\textbf{MB} \\
\textbf{\name} \emph{lite} (DailyDVS-200) & $3.82$\,M & $\mathbf{7.6}$\,\textbf{MB} \\
\textbf{\name} (DailyDVS-200) & $8.04$\,M & $\mathbf{16.1}$\,\textbf{MB} \\
\midrule
TSM & $24.3$\,M & $48.6$\,MB \\
Swin-T & $27.8$\,M & $55.6$\,MB \\
Mamba-Spike & $36.25$\,M & $72.5$\,MB \\
EvMamba & $76.5$\,M & $153.0$\,MB \\
TimeSformer & $121.2$\,M & $242.4$\,MB \\
C3D & $147.2$\,M & $294.4$\,MB \\
\bottomrule
\end{tabular}
\end{table}

The practical reading is that the top of the DailyDVS-200 table is not
reachable on this class of device. Conceding $3.91$\,pp to EvMamba is a
different trade once EvMamba needs $153$\,MB of weight storage against our
$16.1$\,MB. The larger model is not a faster-or-slower option; it is an
unavailable one.

\subsection{Per-Seed Values and Noise Floors}\label{app:seeds}

Five seeds per headline result. DVS-Gesture:
$\{97.73,98.11,98.11,98.48,98.48\}$, i.e.\ $98.18\pm0.31$, best $98.48$; on the
$264$-sample split one clip is $0.38$\,pp, so $\sigma\approx0.83$ clips.
CIFAR10-DVS: $\{81.1,81.3,81.3,81.3,81.4\}$, i.e.\ $81.28\pm0.11$, best
$81.40$, a much tighter $\sigma\approx1.1$ of its $1{,}000$ test samples.
N-Caltech101: $85.49\pm0.18$, and the reported $85.54$ is the median and modal
outcome ($3/5$); on $823$ test samples $\sigma\approx1.5$ samples. N-MNIST:
$99.55\pm0.02$, best $99.57$, i.e.\ $\sigma\approx1.8$ of its $10{,}000$ test
samples, smaller than the gaps separating the leading methods there, which is
why we treat that benchmark as saturated. DailyDVS-200:
$\{45.91,45.74,45.91,45.05,46.08\}$ top-1, i.e.\ $45.74\pm0.40$, and
$\{69.38,69.58,69.73,68.75,70.07\}$ top-5, i.e.\ $69.50\pm0.49$, on the
$4{,}093$-clip test split where one clip is $0.024$\,pp.

\subsection{Per-Dataset Hyperparameters}
\label{app:hyper}
Table~\ref{tab:hyper} gives the complete recipe per dataset as a set of
deltas against one default column, so that what actually varies is visible at
a glance. Three groups of differences are worth calling out, since each was
forced by a property of the data rather than tuned freely. The two harder,
smaller-sample benchmarks, CIFAR10-DVS and N-Caltech101, need more
regularization: drop-path doubles to $0.2$, weight decay drops to $0.03$,
Mixup strengthens to $\alpha{=}0.4$ at probability $0.5$, and warm-up
lengthens to 15 epochs. This is the standard response to a few-thousand-sample
event dataset, and we did not search it per-dataset beyond these values. The
auxiliary losses, SGC and the spike-rate $L_1$, are enabled only for
DVS-Gesture and N-MNIST; on CIFAR10-DVS and N-Caltech101 we found a TET-only
objective at least as good, so those runs use it. N-Caltech101 and DailyDVS-200
were trained on Google Colab at the default batch of $32$; N-MNIST is the one
departure, compensating for its small $34^2$ frames with batch 64.

\begin{table}[h]
\centering
\caption{Per-dataset hyperparameters for \name. ``---'' means identical to
``Default''.}
\label{tab:hyper}
\renewcommand{\arraystretch}{1.12}
\setlength{\tabcolsep}{3pt}
\footnotesize
\fittable{%
\begin{tabular}{lcccccc}
\toprule
Hyperparameter & Default & DVSG & C10-DVS & NCal101 & NMNIST & DDVS-200 \\
\midrule
\multicolumn{7}{l}{\emph{Architecture}} \\
Stage channels $[c_1,c_2,c_3]$ & $[32,64,128]$ & --- & --- & --- & --- & $[64,128,256]$\,\emph{(lite)}\,/\,$[96,192,384]$ \\
Hier.\ level dims            & $(32,64,128)$ & --- & --- & --- & --- & $(64,128,256)$\,\emph{(lite)}\,/\,$(96,192,384)$ \\
Hier.\ pool ratios           & $(1,2,2)$ & --- & --- & --- & --- & --- \\
$d_{\text{state}}$           & $16$   & ---   & ---   & ---   & --- & --- \\
Drop-path rate               & $0.1$  & ---   & $0.2$ & $0.2$ & --- & $0.2$ \\
Classifier head width        & ---    & ---   & ---   & ---   & --- & $1024$$^\dagger$ \\
\midrule
\multicolumn{7}{l}{\emph{Optimization}} \\
Optimizer / Init LR / Min LR & AdamW / $10^{-3}$ / $10^{-6}$ & --- & --- & --- & --- & AdamW/$5\!\times\!10^{-4}$/$10^{-6}$ \\
Weight decay (decay group)   & $0.05$ & ---   & $0.03$ & $0.03$ & --- & $0.02$ \\
LR warm-up / Epochs   & $10$ ep / $300$ & --- & $15$ ep/--- & $15$ ep/--- & --- & $2000$ it.$^\P$/$250$ \\
Batch size                   & $32$   & ---   & ---   & ---   & $64$ & --- \\
Grad-accum steps             & $1$    & ---   & ---   & ---   & $2$ & --- \\
Grad-clip $\|g\|_2$          & $1.0$  & ---   & ---   & ---   & --- & --- \\
Surrogate $\alpha$ (start$\to$end) & $2\to4$ & --- & $2\to3$ & $2\to3$ & --- & --- \\
\midrule
\multicolumn{7}{l}{\emph{Loss}} \\
TET $\lambda$                & $5\!\times\!10^{-3}$ & --- & $10^{-2}$ & $10^{-2}$ & --- & --- \\
SGC $\lambda_{\text{SGC}}$   & $1.0$  & ---   & $0$\,$^{\#}$ & $0$\,$^{\#}$ & --- & $0$\,$^\ddagger$ \\
L1 $\lambda_{\text{L1}}$     & $10^{-4}$ & --- & $0$\,$^{\#}$ & $0$\,$^{\#}$ & --- & --- \\
Mixup $\alpha$ / prob        & $0.2$ / $0.3$ & --- & $0.4$/$0.5$ & $0.4$/$0.5$ & $0.4$/$0.5$ & $0.2$/$0.5$ \\
Label smoothing              & $0.1$  & ---   & $0.05$& $0.05$& $0.05$ & --- \\
EventCutMix $\alpha$         & ---    & ---   & $0.4$ & $0.2$ & $0.2$ & --- \\
\midrule
\multicolumn{7}{l}{\emph{Data}} \\
$T$ (time bins)              & $16$   & ---   & $10$  & $10$  & $10$ & $10$ \\
Spatial $(H,W)$              & $128^2$ & --- & --- & --- & $34^2$ & $224^2$ \\
NDA $n_{\text{ops}}$ / mag   & $1$ / $0.3$ & --- & $2$/$0.5$ & $2$/$0.2$ & $2$/--- & n/a$^\S$ \\
NDA roll / cutout cap        & $0.25$ / $0.25$ & --- & --- & $0.15$/$0.15$ & $0.15$/$0.20$ & n/a$^\S$ \\
Polarity flip / H-flip / T-rev prob & $0.5$/---/--- & --- & ---/$0.5$/--- & --- & ---/$0.5$/$0.5$ & n/a$^\S$/$0.5$/n/a$^\S$ \\
\bottomrule
\end{tabular}}
\begin{flushleft}\footnotesize
$^\dagger$ DailyDVS-200 inserts a widening projection
(\texttt{Linear}$\to$\texttt{LayerNorm}$\to$\texttt{GELU}$\to$\texttt{Linear})
before the classifier: a $384$-dimensional penultimate feature cannot form
the optimal simplex for $200$ classes, so it is widened to $1024$ first. The
other four benchmarks use a single linear head.
$^\P$ DailyDVS-200's warm-up is specified in optimizer iterations (steps),
not epochs, in the training script; the other columns' warm-up is
epoch-based. $^\ddagger$ Only the SGC consistency term is disabled for
DailyDVS-200; spike-rate $L_1$ stays at its default rate, unlike C10-DVS and
N-Caltech101 (marked $^{\#}$), where both are disabled for a TET-only
objective. $^\S$ Not implemented in the DailyDVS-200 data pipeline (no NDA
transform, no polarity flip or temporal reversal); DailyDVS-200's own
horizontal-flip augmentation (prob.\ $0.5$) is unrelated to the NDA/reversal
pipeline used by the other four datasets.\end{flushleft}
\end{table}

\subsection{Hierarchical Mamba Backbone Specification}
\label{app:mamba-spec}
Table~\ref{tab:mamba-spec} specifies the hierarchical scan that the
complexity analysis of \S\ref{sec:complexity} and the parameter counts of
Table~\ref{tab:resolution-scaling} are computed from. The design principle is that each level balances sequence length against
representational width. Moving down a level halves the token count $L_\ell$ and
doubles the model dimension $d_\ell$: the state-space scan cost
$\mathcal{O}(L_\ell d_{\text{state}} d_\ell)$ stays exactly constant, while the
input/output projection cost $\mathcal{O}(L_\ell d_\ell^2)$ scales by only
$2\times$ per level (subquadratic compared to unpooled widening). Total cost
is therefore a \emph{sum} of three compact level costs rather than one large
monolithic scan, which is what allows a wide final Mamba dimension ($d_{\text{m}}{=}128$)
at a sequence length of only $64$ retained tokens by the deepest level. The
state dimension is held at $d_{\text{state}}{=}16$ and expansion at $2$
throughout; these are Mamba defaults and we did not tune them. The Level-1
token count is $H_3W_3$, the deepest spiking stage's spatial grid (\S\ref{sec:mrs3a}), and halves at each subsequent level by the pool
ratios; none of this is a function that is tuned per resolution, which is
precisely the property that makes the deployed parameter count
resolution-independent (\S\ref{sec:complexity}).

\begin{table}[h]
\centering
\caption{Per-level configuration of the hierarchical bidirectional Mamba
backbone (DVS-Gesture, $H_3{=}W_3{=}16$). Each level is a forward$+$backward
scan pair; ``tokens'' is the retained spatial-token count (preserved across
all $T$).}
\label{tab:mamba-spec}
\renewcommand{\arraystretch}{1.12}
\setlength{\tabcolsep}{6pt}
\begin{tabular}{lcccc}
\toprule
Level & Model dim $d_{\text{m}}$ & Tokens $k_\ell$ & State $d_{\text{state}}$ & Expansion \\
\midrule
1 & 32  & 256 & 16 & 2 \\
2 & 64  & 128 & 16 & 2 \\
3 & 128 & 64  & 16 & 2 \\
\bottomrule
\end{tabular}
\end{table}

\subsection{Per-Component Ablation Significance}\label{app:sigma}
Section~\ref{sec:ablation} judges each ablation against the five-seed
standard deviation measured on the \emph{same} dataset, rather than against a
single pooled figure. Table~\ref{tab:sigma} gives the per-component values
behind the summary stated there. DVS-Gesture, at $264$ test clips and
$\sigma{=}0.31$\,pp, simply lacks the resolution to separate two of the four
components, while the finer-grained benchmarks separate all four by a wide
margin. We treat DVS-Gesture as the weakest of the three ablation substrates
despite being our headline benchmark, and read the CIFAR10-DVS and
N-Caltech101 columns as the load-bearing evidence.

\begin{table}[h]
\centering
\caption{Ablation effect sizes in units of each dataset's own five-seed
$\sigma$. DVS-Gesture $\sigma{=}0.31$\,pp, CIFAR10-DVS $0.11$\,pp,
N-Caltech101 $0.18$\,pp. Effects below ${\approx}2\sigma$ are not claimed.}
\label{tab:sigma}
\renewcommand{\arraystretch}{1.15}
\setlength{\tabcolsep}{6pt}
\begin{tabular}{lccc}
\toprule
Component removed & DVSG & CIFAR10-DVS & NCal101 \\
\midrule
DS-MPA  & $1.5\sigma$ & $7.3\sigma$  & $17.4\sigma$ \\
RepConv & $1.5\sigma$ & $5.5\sigma$  & $10.7\sigma$ \\
TDM     & $0.2\sigma$ & $12.8\sigma$ & $13.4\sigma$ \\
MR-S3A  & $0.2\sigma$ & $10.0\sigma$ & $8.0\sigma$ \\
\bottomrule
\end{tabular}
\end{table}

\subsection{Neuron-Mix and Pruning Sweeps}
\label{app:sweeps}
Two design choices in the deployed model are reported here as swept rather
than asserted. For the \emph{neuron mix} (Table~\ref{tab:neuron-mix}), we
evaluated eight depth-by-depth assignments of CSiLIF and SiLIF across the
three spiking stages. The deployed CSiLIF--SiLIF--SiLIF ordering is the best
of the eight, and placing CSiLIF in both early stages is the most harmful
configuration. Spike-rate variance is comparable across all eight, so the
choice rests on accuracy alone and we make no stability claim for it. The
result is consistent with the intended reading of CSiLIF: a complex-valued
resonate-and-fire neuron is most useful where the input still carries raw
temporal structure, i.e.\ at stage 1.

Token pruning is the wrong lever for this architecture
(Table~\ref{tab:pruning-sweep}). An earlier version selected a top-$k$ subset
of spatial tokens before the scan; we removed it, and the sweep below is why.
On DVS-Gesture, accuracy is identical at
$98.48\%$ for $k\in\{64,128,256\}$, where $k{=}256$ is the full
$16{\times}16$ grid, i.e.\ no pruning at all, so pruning bought no accuracy.
It also bought no parameters, since the invariance in
Table~\ref{tab:resolution-scaling} holds with pruning disabled because Mamba
blocks are $\mathcal{O}(d_{\text{m}}^2)$ and independent of sequence length,
and it bought little compute, since an $8\times$ change in sequence length
moves total FLOPs by only $11\%$ (Table~\ref{tab:flops}). A component that
buys none of the three is better removed than defended, so the deployed model
scans every token MR-S3A produces. One honest limit of the sweep: the
DVS-Gesture test split has $0.38$\,pp granularity per clip, so the tied
entries are indistinguishable at this sample size rather than provably equal.

\begin{table}[t]
\centering
\caption{Heterogeneous neuron-mix ablation on DVS-Gesture. Each row
assigns a neuron family $\{$LIF, SiLIF, CSiLIF$\}$ to each of the three
RepSpikeStages. ``SR-var'' is the cross-epoch standard deviation of the
mean spike rate during the last 50 epochs (lower = more stable). All rows are
measured single-seed runs; the deployed configuration (\textbf{bold}) attains
the highest accuracy, while training is comparably stable across all mixes
(SR-var $\le 10^{-3}$). Rows sorted by accuracy.}
\label{tab:neuron-mix}
\renewcommand{\arraystretch}{1.15}
\setlength{\tabcolsep}{4pt}
\begin{tabular}{lccc}
\toprule
Configuration (Stage 1, 2, 3) & Acc.\ (\%) & Params (M) & SR-var \\
\midrule
\textbf{CSiLIF, SiLIF, SiLIF (ours)} & \textbf{98.48} & \textbf{0.88} & \textbf{0.0005} \\
\midrule
SiLIF, SiLIF, CSiLIF                & 98.11 & 0.88 & 0.0004 \\
SiLIF, CSiLIF, CSiLIF               & 98.11 & 0.88 & 0.0003 \\
LIF, SiLIF, SiLIF                   & 97.73 & 0.88 & 0.0003 \\
LIF, LIF, LIF                       & 97.73 & 0.88 & 0.0009 \\
SiLIF, SiLIF, SiLIF                 & 97.35 & 0.88 & 0.0002 \\
CSiLIF, CSiLIF, SiLIF               & 96.59 & 0.88 & 0.0005 \\
CSiLIF, CSiLIF, CSiLIF              & 96.21 & 0.88 & 0.0005 \\
\bottomrule
\end{tabular}
\end{table}

\begin{table}[t]
\centering
\caption{\textbf{Why the deployed model prunes no tokens.} An earlier version
of this architecture selected a top-$k$ subset of spatial tokens before the
scan; this sweep is why we removed it. $k$ is the number of spatial tokens
retained per timestep and the full sequence is $N{=}H_3W_3{=}256$, so
$k{=}256$ \emph{is} the unpruned model. Accuracy is identical for
$k\in\{64,128,256\}$: pruning to a quarter of the tokens bought no accuracy.
It also bought no parameters --- the resolution invariance of
Table~\ref{tab:resolution-scaling} holds with pruning disabled, since Mamba
blocks are $\mathcal{O}(d_{\text{m}}^2)$ and independent of sequence length ---
and little compute, an $8\times$ change in sequence length moving total FLOPs by
$11\%$ (Table~\ref{tab:flops}). The deployed model therefore scans every token.
Caveat: the DVS-Gesture split has $0.38$\,pp granularity per clip, so the tied
entries are indistinguishable at this sample size rather than provably equal;
$k{=}32$ costing $0.75$\,pp is two clips.}
\label{tab:pruning-sweep}
\renewcommand{\arraystretch}{1.15}
\setlength{\tabcolsep}{6pt}
\begin{tabular}{lcccc}
\toprule
$k$               & 32 & 64 & 128 & \textbf{256 (full, deployed)} \\
\midrule
Tokens / sample   & 512 & 1024 & 2048 & \textbf{4096} \\
Acc.\ (\%)        & 97.73 & 98.48 & 98.48 & \textbf{98.48} \\
$\Delta$ (pp)     & --0.75 & 0.00 & 0.00 & \textbf{---} \\
\bottomrule
\end{tabular}
\end{table}

\subsection{Time-Step and Spike-Rate Sweeps}
\label{app:tsweep}
Table~\ref{tab:T-sweep} sweeps the number of time bins. Accuracy rises
steeply to $T{=}16$ and is then flat ($98.48\%$ at both $16$ and $20$), while
measured compute (MACs, not SOPs, \S\ref{sec:complexity}) grows strictly
linearly in $T$. $T{=}16$ is the knee; everything beyond it
buys computation with no accuracy. This is the single most consequential
efficiency knob in the model. Halving $T$ to $8$ halves the operation count at
a cost of $1.89$\,pp, a trade a deployment might reasonably take, whereas
raising $T$ past $16$ is never worthwhile.

Figure~\ref{fig:lambda-sweep} sweeps the spike-rate $L_1$ coefficient across
four orders of magnitude. Both accuracy and the mean spike rate $\bar r$ are
nearly flat over the whole range, with the default $10^{-4}$ giving the best
accuracy at a low rate. This is a negative result worth reporting: sparsity
here is governed mainly by the architecture and the surrogate-gradient
schedule, not by the explicit penalty, so $\lambda_{\text{L1}}$ is not a
sensitive hyperparameter and should not be presented as one.

\begin{table}[h]
\centering
\caption{Sensitivity to time bins $T$ on DVS-Gesture. All measured single-seed.
MACs rather than SOPs because TDM makes backbone convolutions dense
(\S\ref{sec:complexity}).}
\label{tab:T-sweep}
\renewcommand{\arraystretch}{1.15}
\begin{tabular}{lccccc}
\toprule
$T$ & 4 & 8 & 12 & 16 & 20 \\
\midrule
Acc.\ (\%) & 94.32 & 96.59 & 97.73 & \textbf{98.48} & 98.48 \\
MACs (M)   & 430.5 & 861.0 & 1291.4 & 1721.9 & 2152.4 \\
\bottomrule
\end{tabular}
\end{table}

\begin{figure}[h]
\centering
\includegraphics[width=0.72\linewidth]{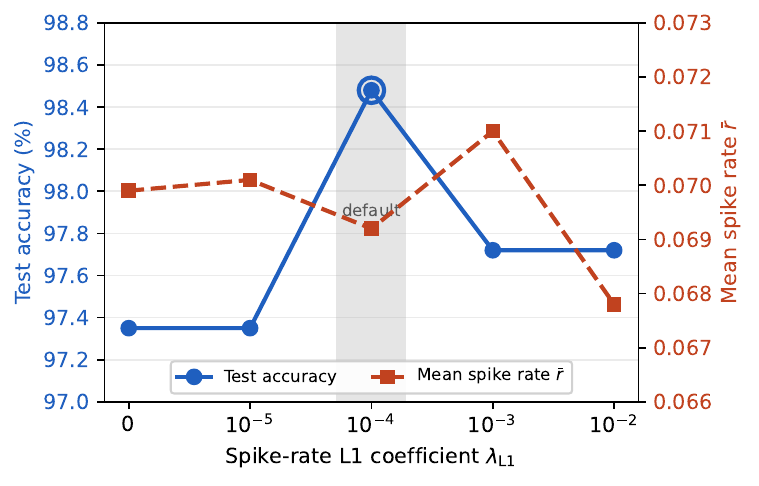}
\caption{Sensitivity to the spike-rate L1 coefficient $\lambda_{\text{L1}}$ on
DVS-Gesture. The default $10^{-4}$ gives the highest accuracy at a low spike
rate; accuracy and $\bar r$ are otherwise nearly flat across four orders of
magnitude.}
\label{fig:lambda-sweep}
\end{figure}

\subsection{Training Dynamics}
\label{app:dynamics}
Figure~\ref{fig:spike-rates} tracks per-stage spike rates during training.
The pattern is depth-dependent and stable: stage 1 settles near
$\bar r\approx0.2$ while deeper stages fall well below it. This matters for
two claims made elsewhere. It supports the firing-rate range assumed by the
accumulate-only accounting of Proposition~\ref{prop:ac-tdm}
(\S\ref{sec:complexity}). It also explains why the
spike-driven top-$k$ ranking was informative in the pruning ablation
(Table~\ref{tab:pruning-sweep}): with deep-stage activity this sparse, the
saliency signal cleanly separated active from inactive positions, even though
pruning to it bought no accuracy.

\begin{figure}[h]
\centering
\includegraphics[width=0.72\linewidth]{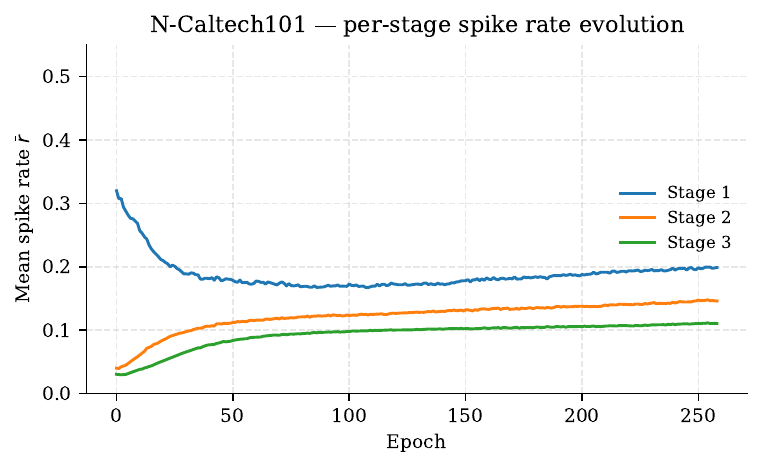}
\caption{Per-stage spike rate evolution (N-Caltech101; the same depth-dependent
pattern holds on DVS-Gesture): stage 1 stays near $\bar r\approx0.2$, deeper
stages settle well below it.}
\label{fig:spike-rates}
\end{figure}

\subsection{DS-MPA Attention and Per-Class Accuracy}
\label{app:attn}
Figure~\ref{fig:attn} visualizes stage-3 DS-MPA attention across four
timesteps of a DVS-Gesture ``hand-clap'' sample. Two things are visible. The
attention map is broadly distributed and peaks at the mid-gesture motion
maximum ($t{=}6$), then localizes as the hands meet ($t{=}14$): the module
tracks where evidence is, rather than collapsing onto a fixed region. Every value lies inside $[0,1]$ here, which is a property of this checkpoint's
value stream rather than a prediction of Prop.~\ref{prop:bounded}: with $V$ a
learned projection of prior spikes, the proposition bounds $A$ by the convex
hull of $V$ and the origin, not by $[0,1]$. The bound is constructive, so this
figure is a check that the implementation matches the proof rather than
evidence for the bound itself.

Table~\ref{tab:perclass} decomposes the headline $98.48\%$. Nine of the eleven
classes are perfect. The entire error budget sits in \emph{air-drums} and
\emph{air-guitar} at $91.67\%$ each, two misclassified clips apiece out of
twenty-four. These two are also the classes a human observer finds closest, both
sustained two-handed motions without the large directional sweep that
separates the arm gestures, so the residual error concentrates in the
semantically hardest pair rather than spreading across the task. With 264
test clips (24 per class), one clip is $0.38$\,pp, so these per-class figures should be read
as coarse: the difference between a perfect class and a $91.67\%$ class is just
two samples.

\begin{figure}[h]
\centering
\includegraphics[width=\linewidth]{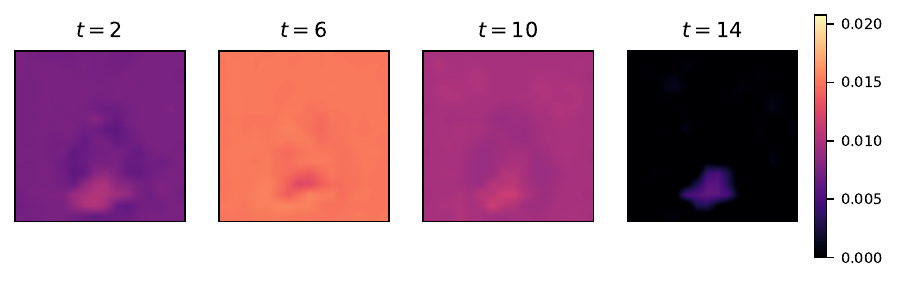}
\caption{Stage-3 DS-MPA attention $A_t$ (mean over channels) at
$t\in\{2,6,10,14\}$ on a DVS-Gesture ``hand-clap'' sample. Attention is broadly
distributed and strongest at the mid-gesture motion peak ($t{=}6$), then
localizes as the hands meet ($t{=}14$); values are dynamic convex-hull bounded
(Prop.~\ref{prop:bounded}, strictly within $[0,1]$ for unprojected binary spikes).}
\label{fig:attn}
\end{figure}

\begin{table}[h]
\centering
\caption{Per-class top-1 accuracy of \name on DVS-Gesture.}
\label{tab:perclass}
\renewcommand{\arraystretch}{1.12}
\setlength{\tabcolsep}{6pt}
\begin{tabular}{lc}
\toprule
Class & Acc.\ (\%) \\
\midrule
hand-clap, R/L-hand wave, R/L-arm cw, R/L-arm ccw, arm-roll, other & 100.00 \\
air-drums & 91.67 \\
air-guitar & 91.67 \\
\midrule
\textbf{Mean} & \textbf{98.48} \\
\bottomrule
\end{tabular}
\end{table}

\paragraph{Scope of the resolution-independence diagnosis.}
The coupling we remove is specific to flatten-then-project SNN--to--SSM
bridges. Spiking \emph{transformer} baselines are already flat in parameters
across input resolutions, so the mechanism we identify does not describe a
deficiency of attention architectures generally. It describes only the dense
projection that carries a spiking front-end into a state-space scan.

\paragraph{Accumulate-only accounting.}
A temporal operator placed \emph{before} a convolution silently makes that
convolution dense under standard sequential execution, since its operands are
no longer binary. The standard synaptic-operation methodology, applied
uncritically, would then credit our backbone with accumulate-only cost it
does not have in that execution mode. We flag this as a general accounting
caveat for any spiking backbone that uses pre-convolution temporal operators,
not something specific to our design; \S\ref{sec:complexity}
(Proposition~\ref{prop:ac-tdm}) gives an exact linear-commutation
reparameterization that recovers accumulate-only execution for neuromorphic
deployment, and we report an energy advantage only under that
reparameterization, not for the standard sequential path.

\paragraph{Full limitations.}
All five headline results are five-seed measurements ($98.18\pm0.31$ on
DVS-Gesture, $81.28\pm0.11$ on CIFAR10-DVS, $85.49\pm0.18$ on N-Caltech101,
$99.55\pm0.02$ on N-MNIST, $45.74\pm0.40$ on DailyDVS-200), as is the
parameter-matched sequence-module study of Table~\ref{tab:seq-ablation}. The
component-wise ablations remain single-seed, and we judge them against those
$\sigma$ rather than treating small $\Delta$ as real. On conventional
GPU/edge runtimes the model is parameter-efficient only, not
energy-efficient: executed sequentially, TDM precedes each backbone
convolution, so its operands are real-valued and accumulate-only accounting
does not apply, and the dense DS-MPA, bridge and Mamba stages dominate the
remaining cost. The accumulate-only property is recoverable for
neuromorphic deployment by the exact commutation of
Proposition~\ref{prop:ac-tdm}, which we verify algebraically and numerically
but have not measured on neuromorphic silicon (\S\ref{sec:complexity}). Weights shrink under fusion, but
activation memory grows under batching. The selective-scan kernel is
CUDA-specific, and quantized deployment on neuromorphic silicon is untested,
though the fused $3{\times}3$ backbone is directly amenable to it. Several
design choices are reported as deployed rather than swept: alternative DS-MPA
feature maps, per-dataset $k$, the grouped/low-rank DS-MPA variants of
Table~\ref{tab:q6-lowrank} (untrained), and the bridge width $d_{\text{m}}$
and pooling schedule. The dual-kernel accumulate-only path of
Proposition~\ref{prop:ac-tdm} is likewise proven and numerically verified
rather than benchmarked on hardware. DailyDVS-200 uses uniform event-to-frame binning
only, and sensitivity to that choice is unverified. The DailyDVS-200
lite/wide pair (\S\ref{sec:exp}) is an axis distinct from the
resolution-invariance claim of Fig.~\ref{fig:interface}: that claim fixes the
bridge width $d_{\text{m}}$ and varies $H{\times}W$, while the lite/wide pair
fixes $H{\times}W{=}224^2$ and deliberately varies $d_{\text{m}}$ as a
capacity choice.

\begin{table}[t]
\centering
\caption{\textbf{DS-MPA cost, isolated.} Measured at the exact
$(B,T,C,H,W)$ each DS-MPA instance receives in the deployed DVS-Gesture model
($B{=}4$ --- the batch the recipe trains at --- $T{=}16$, $128^2$ input), summed
over the three stages on one RTX 3050; CUDA-event timing, 30 reps after warm-up,
activation = peak allocated during one forward+backward. \emph{removed} returns
$X$ unchanged with $A{=}0$, so the downstream $A_t$ projection is inert.
Against a full training step of $405.8$\,ms, DS-MPA is $7.9\%$ of step time and
$4.3\%$ of parameters; substituting the cheap gate recovers only $5.5\%$ of step
time while forfeiting the $3.16$\,pp DS-MPA contributes on N-Caltech101
(Table~\ref{tab:ablation-main}) --- the largest single-component effect we measure.
Two properties make this more than a single-GPU anecdote. First, the
DS-MPA/gate \emph{ratio} is $3.30/3.33/3.31/3.28$ at $B{=}1/2/4/8$, stable to
$\pm0.8\%$ across an $8\times$ change in utilisation, and cost scales linearly
with $B$ from $B{=}1$, so the operator is compute-bound rather than
launch-bound: the ratio reflects arithmetic, not this device's idle time.
Absolute latencies are of course hardware-specific; the parameter and activation
figures are not. Second, the $O(NC^2)$ term does \emph{not} grow with channel
width here: $N$ falls $16\times$ as $C$ rises $4\times$, so $NC^2$ is constant
($4.19$\,M per stage) and measured forward time \emph{decreases} with depth.}
\label{tab:dsmpa-cost}
\renewcommand{\arraystretch}{0.95}
\setlength{\tabcolsep}{5pt}
\begin{tabular}{l rrrr}
\toprule
Channel-mixing operator & Params & Fwd (ms) & Fwd+bwd (ms) & Act.\ (MiB) \\
\midrule
\textbf{DS-MPA (ours)}             & $64{,}736$ & $11.48$ & $32.26$ & $790.6$ \\
$1{\times}1$ conv $+$ sigmoid gate & $21{,}728$ & $3.50$  & $9.75$  & $336.0$ \\
removed (identity)                 & $0$        & $0.29$  & $1.30$  & $112.0$ \\
\midrule
\multicolumn{5}{l}{\emph{does the $O(NC^2)$ term worsen with width?} DS-MPA at $B{=}1$:} \\
$[32,64,128]$   & $64{,}736$    & $3.11$  & $8.87$  & $197.7$ \\
$[64,128,256]$  & $258{,}496$   & $5.56$  & $16.12$ & $397.6$ \\
$[128,256,512]$ & $1{,}033{,}088$ & $12.21$ & $34.76$ & $805.4$ \\
\bottomrule
\end{tabular}
\end{table}

\subsection{Latency and Memory}\label{app:latency}
Table~\ref{tab:efficiency} reports CUDA-event latency and peak memory. Fusing
the multi-branch graph cuts $B{=}8$ latency $1.53\times$, beyond the
$1.74\times$ parameter cut. At $B{=}1$, the edge operating point, \name's
total footprint is below the baseline's, since the baseline's $138$\,MiB of
weights dominates there. That ordering reverses at $B{=}8$: \name's
activations are $\sim$$3.8\times$ larger, so it becomes slower and larger in
total memory under batching. The trade is weight memory for activation memory
and batched throughput.

\begin{table}[t]
\centering
\caption{Measured deployment cost on DVS-Gesture ($T{=}16$, $128{\times}128$,
fused; CUDA events, median of 30 runs after 10 warm-ups, RTX~3050). ``Weights''
is parameters $+$ buffers; ``act.'' is peak allocation minus weights. \name is
$41\times$ smaller in weights and its fused graph is $1.53\times$ faster than
its training graph, but its activations are larger than the baseline's --- the
trade is weight memory for activation memory and batched throughput.}
\label{tab:efficiency}
\renewcommand{\arraystretch}{1.1}
\setlength{\tabcolsep}{5pt}
\fittable{%
\begin{tabular}{lcccc}
\toprule
Model & Weights & $B{=}1$ lat. & $B{=}8$ lat. & $B{=}8$ peak / act. \\
\midrule
Mamba-Spike (baseline)    & 138.3\,MiB & 38.2\,ms & \textbf{51.5\,ms} & \textbf{456 / 272\,MiB} \\
\name, training graph     & 5.9\,MiB   & 36.5\,ms & 159.3\,ms & 1073 / 1043\,MiB \\
\textbf{\name, deployed}  & \textbf{3.4\,MiB} & \textbf{34.2\,ms} & 103.8\,ms & 1070 / 1043\,MiB \\
\bottomrule
\end{tabular}}
\end{table}

\begin{table}[t]
\centering
\caption{\textbf{Cheaper DS-MPA parameterizations are available and preserve
the bound, but we do not adopt them.} The bound is structural: $\phi\ge0$ and
the normalizer make $A_{i,\cdot}$ a convex combination of the rows of $V$
(Prop.~\ref{prop:bounded}), which holds for \emph{any} factorization of
$W_Q,W_K,W_V$ --- verified at every stage width by checking each $A_{i,c}$
against the componentwise range of $V_{\cdot,c}$ that variant produces
($9/9$). So the $O(C^2)$ projection cost \emph{can} be reduced without
theoretical loss. We nevertheless deploy the full operator, for a reason the
last column makes explicit: DS-MPA is only $7.4\%$ of the deployed
$875{,}122$ parameters, so even the $4\times$ reduction moves the headline
figure by $5.5\%$, while removing DS-MPA altogether costs $3.16$\,pp on
N-Caltech101 (Table~\ref{tab:ablation-main}) --- the largest single-component effect
we measure. We are not willing to risk a measured $3.16$\,pp for an unmeasured
$5.5\%$ of weight storage, and we have not trained these variants, so we make
no accuracy claim for them. They are reported as a sound design option, not as
a result.}
\label{tab:q6-lowrank}
\renewcommand{\arraystretch}{0.95}
\setlength{\tabcolsep}{5pt}
\begin{tabular}{l r r c r}
\toprule
DS-MPA parameterization & Params & vs.\ dep. & Bound & Deployed total \\
\midrule
\textbf{deployed}: $3\times(C\!\to\!C)$  & $64{,}736$ & $1.00\times$ & \checkmark & $\mathbf{875{,}122}$ \\
low-rank: $3\times(C\!\to\!\tfrac{C}{4}\!\to\!C)$ & $32{,}480$ & $0.50\times$ & \checkmark & $842{,}866$ ($-3.7\%$) \\
grouped: $3\times(C\!\to\!C)$, $g{=}4$   & $16{,}352$ & $0.25\times$ & \checkmark & $826{,}738$ ($-5.5\%$) \\
\bottomrule
\end{tabular}
\end{table}

\begin{table}[t]
\centering
\caption{\textbf{What resolution-independence does and does not buy.}
Complements Table~\ref{tab:resolution-scaling}: deployed parameters are
\emph{constant} across a $43\times$ range in pixel count, but activation memory
and latency are not --- they scale with resolution as in any convolutional
model. Measured on one RTX 3050, $T{=}16$, batch $4$, \name's DVS-Gesture
configuration; parameters counted after fusion. $^{\dagger}$At $224^2$ the
training step reaches this card's $8$\,GiB capacity, so the training throughput
there is allocator-limited rather than a clean measurement (inference, which
needs no gradients, still runs at $18.2$\,samples/s). This is the trade the
paper claims and no more: weight storage is decoupled from resolution, total
memory is not.}
\label{tab:q7-resolution-cost}
\renewcommand{\arraystretch}{0.95}
\setlength{\tabcolsep}{6pt}
\begin{tabular}{l r r r r}
\toprule
Input resolution & Deployed (M) & Train (samp/s) & Infer (samp/s) & Peak (GiB) \\
\midrule
$34\times34$   & \textbf{0.875} & $25.4$            & $102.0$ & $0.27$ \\
$48\times48$   & \textbf{0.875} & $25.4$            & $83.4$  & $0.45$ \\
$128\times128$ & \textbf{0.875} & $10.3$            & $50.1$  & $2.67$ \\
$224\times224$ & \textbf{0.875} & $0.8^{\dagger}$   & $18.2$  & $8.00^{\dagger}$ \\
\bottomrule
\end{tabular}
\end{table}

\begin{table}[t]
\centering
\caption{\textbf{Compute cost and how it scales with $k$, $T$, $C$ and
resolution.} FLOPs are \emph{measured} with PyTorch's flop counter on the fused
inference graph (torch's convention: one multiply--accumulate is $2$ FLOPs;
MACs $=$ FLOPs$/2$), so the selective scan's matmuls and the DS-MPA contraction
are counted as executed rather than derived from a formula. Batch $1$; the
deployed DVS-Gesture point is $[32,64,128]$, $T{=}16$, $128^2$, all tokens scanned.
Three things are worth reading off this table. \emph{First}, an $8\times$ change
in the SSM sequence length ($32\!\to\!256$ tokens per timestep) costs only
$11\%$ more compute, because the convolutional front-end dominates and not the
scan --- so the scan's length is not the lever that matters here. \emph{Second},
cost is exactly linear in $T$ and exactly quadratic in $C$ ($4\times$ per
width doubling). \emph{Third}, deployed \emph{parameters} are flat across
resolution while compute is not: this is the same trade
Table~\ref{tab:q7-resolution-cost} makes for memory. The small parameter
variation down the $T$ block is BNTT, whose affine terms are per-timestep.
This is a compute count and \emph{not} an energy claim: TDM precedes each
backbone convolution, so accumulate-only accounting does not apply
(\S\ref{sec:complexity}), and arithmetic counts say nothing about memory
traffic.}
\label{tab:flops}
\renewcommand{\arraystretch}{0.95}
\setlength{\tabcolsep}{5pt}
\begin{tabular}{l r r r}
\toprule
Configuration & Params (M) & GFLOPs & GMACs \\
\midrule
\textbf{deployed} ($T{=}16$, $128^2$) & $\mathbf{0.875}$ & $\mathbf{9.00}$ & $\mathbf{4.50}$ \\
\midrule
\multicolumn{4}{l}{\emph{SSM sequence length} (tokens entering the scan; $T{=}16$, $128^2$)} \\
$L/T{=}32$ & $0.875$ & $8.11$  & $4.05$ \\
$L/T{=}128$& $0.875$ & $8.49$  & $4.25$ \\
$L/T{=}256$& $0.875$ & $9.00$  & $4.50$ \\
\midrule
\multicolumn{4}{l}{\emph{time bins $T$} ($128^2$)} \\
$T{=}4$    & $0.864$ & $2.06$  & $1.03$ \\
$T{=}8$    & $0.868$ & $4.12$  & $2.06$ \\
$T{=}32$   & $0.890$ & $16.47$ & $8.23$ \\
\midrule
\multicolumn{4}{l}{\emph{stage widths $C$} ($T{=}16$, $128^2$)} \\
$[64,128,256]$   & $3.361$  & $32.73$  & $16.36$ \\
$[128,256,512]$  & $13.165$ & $130.49$ & $65.25$ \\
\midrule
\multicolumn{4}{l}{\emph{input resolution} ($T{=}16$) --- params flat, compute not} \\
$34\times34$     & $0.875$ & $0.78$  & $0.39$ \\
$48\times48$     & $0.875$ & $1.27$  & $0.63$ \\
$224\times224$   & $0.875$ & $24.69$ & $12.34$ \\
\bottomrule
\end{tabular}
\end{table}

\end{document}